\def\eqref#1{(\ref{#1})}
\def\1{\bm{1}}
\DeclareMathAlphabet{\mathsfit}{\encodingdefault}{\sfdefault}{m}{sl}
\SetMathAlphabet{\mathsfit}{bold}{\encodingdefault}{\sfdefault}{bx}{n}
\DeclareMathOperator*{\argmax}{arg\,max}
\DeclareMathOperator*{\argmin}{arg\,min}
  \providecommand\BibTeX{{%
    \normalfont B\kern-0.5em{\scshape i\kern-0.25em b}\kern-0.8em\TeX}}}
\theoremstyle{plain}
\newtheorem{claim}{Claim}
\newtheorem{definition}{Definition}
\newcommand{\ms}{MinSup} 
\newcommand{\Pol}{{\Pi}}                 
\newcommand{\pol}{{\pi}}                 
\newcommand{\polnull}{{\pol_0}}          
\newcommand{\polres}{{\bar{\pol}}}       
\newcommand{\Polres}{{\bar{\Pol}}}       
\newcommand{\prop}{p}                    
\newcommand{\x}{x}                       
\newcommand{\Px}{P(\Xcal)}               
\newcommand{\y}{y}                       
\newcommand{\rew}{r}                     
\newcommand{\rewxy}{\delta}              
\newcommand{\rewhatxy}{\hat{\delta}}     
\newcommand{\rewregxy}{\hat{\delta}^{\theta}} 
\newcommand{\Prewxy}{P(\rew|\x,\y)}      
\newcommand{\Rew}{R}                     
\newcommand{\Rewips}{\hat{R}_{IPS}}      
\newcommand{\Rewipsaug}{\hat{R}_{IPS}^{\hat{\delta}}} 
\newcommand{\Rewdraug}{\hat{R}_{DR}^{\hat{\delta}}} 
\newcommand{\Rminsup}{\hat{R}_{MinSup}} 
\newcommand{\SD}[3]{\mathcal{D}_{#1}(#2|#3)} 
\newcommand{\SDx}[2]{\SD{\Xcal}{#1}{#2}}
\newcommand{\SDxpol}{\SDx{\pol}{\polnull}}
\newcommand{\Ucalpol}[1]{\Ucal(#1,\polnull)}
\newcommand{\Ucalpolc}[1]{\Ucal(#1,\polnull)^c}
\newcommand{\Ucalxpol}{\Ucalpol{\x}}
\begin{document}
\fancyhead{}

\title{Off-policy Bandits with Deficient Support}

\author{Noveen Sachdeva}
\affiliation{%
  \institution{International Institute of Information Technology}
  \city{Hyderabad, India}
}
\email{ernoveen@gmail.com}
\authornote{Work done during internship at Cornell University.}
\authornotemark[2]

\author{Yi Su}
\affiliation{%
  \institution{Cornell University}
  \city{Ithaca, NY, USA}
}
\email{ys756@cornell.edu}
\authornote{Equal contribution.}

\author{Thorsten Joachims}
\affiliation{%
  \institution{Cornell University}
  \city{Ithaca, NY, USA}
}
\email{tj@cs.cornell.edu}

\begin{abstract}
Learning effective contextual-bandit policies from past actions of a deployed system is highly desirable in many settings (e.g. voice assistants, recommendation, search), since it enables the reuse of large amounts of log data. State-of-the-art methods for such off-policy learning, however, are based on inverse propensity score (IPS) weighting. A key theoretical requirement of IPS weighting is that the policy that logged the data has "full support", which typically translates into requiring non-zero probability for any action in any context. Unfortunately, many real-world systems produce support deficient data, especially when the action space is large, and we show how existing methods can fail catastrophically. To overcome this gap between theory and applications, we identify three approaches that provide various guarantees for IPS-based learning despite the inherent limitations of support-deficient data: restricting the action space, reward extrapolation, and restricting the policy space. We systematically analyze the statistical and computational properties of these three approaches, and we empirically evaluate their effectiveness. In addition to providing the first systematic analysis of support-deficiency in contextual-bandit learning, we conclude with recommendations that provide practical guidance.
\end{abstract}

\begin{CCSXML}
<ccs2012>
   <concept>
       <concept_id>10002951.10003317.10003338</concept_id>
       <concept_desc>Information systems~Retrieval models and ranking</concept_desc>
       <concept_significance>500</concept_significance>
       </concept>
   <concept>
       <concept_id>10010147.10010257.10010282.10010292</concept_id>
       <concept_desc>Computing methodologies~Learning from implicit feedback</concept_desc>
       <concept_significance>500</concept_significance>
       </concept>
 </ccs2012>
\end{CCSXML}

\ccsdesc[500]{Information systems~Retrieval models and ranking}
\ccsdesc[500]{Computing methodologies~Learning from implicit feedback}

\keywords{contextual bandits, counterfactual reasoning, log data, implicit feedback, off-policy learning}

\maketitle

\section{Introduction}
Many interactive systems (e.g., voice assistants, recommender systems) can be modeled as \textit{contextual bandit} problems \citep{langford2008epoch}. In particular, each user request provides a context (e.g., user profile, query) for which the system selects an action (e.g., recommended product) and receives a reward (e.g., purchase, click). Such contextual-bandit data is logged in large quantities as a by-product of normal system operation  \citep{li2011unbiased, li2015counterfactual, joachims2017unbiased}, making it an attractive and low-cost source of training data. 
With terabytes of log data readily available in many online systems, a range of algorithms has been proposed for batch learning from such logged contextual-bandit feedback \citep{strehl2010learning,dudik2011doubly, Swaminathan/Joachims/15c, thomas2016data,farajtabar2018more,su2019cab, london2019bayesian}. However, as we will argue below, these algorithms require an assumption about the log data that makes them unsuitable for many real-world applications.

This assumption is typically referred to as the positivity or support assumption, and it is required by the Empirical Risk Minimization (ERM) objective that these algorithms optimize. Specifically, unlike in online learning for contextual bandits \citep{williams1992simple, agarwal2014taming}, batch learning from bandit feedback (BLBF) operates in the off-policy setting. During off-policy learning, the algorithm has to address the counterfactual question of how much reward each policy in the policy space would have received, if it had been used instead of the logging policy. To this effect, virtually all state-of-the-art off-policy learning methods for contextual-bandit problems rely on counterfactual estimators \citep{bottou2013counterfactual,dudik2011doubly, Swaminathan/Joachims/15c,thomas2016data,farajtabar2018more,su2019cab} that employ inverse propensity score (IPS) weighting to get an unbiased ERM objective. Unlike regression-based direct-modeling (DM) approaches that are often hampered by bias from model misspecification, IPS allows a controllable bias-variance trade-off through clipping and other variance-regularization techniques \citep{strehl2010learning,Swaminathan/Joachims/15c,london2019bayesian}. 

Unfortunately, IPS and its guarantee of unbiasedness break down when the logging policy does not have full support -- meaning that some actions have zero probability of being selected under the logging policy. In this case IPS can be highly biased. Full support is an unreasonable assumption in many real-world systems, especially when the action space is large and many actions have poor rewards. For example, in a recommender system with a large catalog (e.g. movies, music), it may be that only a small percentage of the actions have support under the logging policy. We will show that existing learning algorithms can fail catastrophically on such support deficient data. 

In this paper, we explore how to deal with support deficient log data in off-policy contextual-bandit learning. Since support deficiency translates into blind spots where we do not have any knowledge about the rewards, accounting for these blind spots during training is crucial for robust learning. We characterize three approaches for dealing with support deficiency. The first approach is to restrict the action space to those actions that have support under the logging policy. Second, we explore imputation methods that extrapolate estimated rewards to those blind spots. And, third, we restrict the policy space to only those policies that have limited exposure to the blind spots. To make the latter approach computationally tractable, we define a new measure of Support Divergence between policies, show how it can be estimated efficiently without closed-form knowledge of the logging policy, and how it can be used as a constraint on the policy space. We analyze the statistical and computational properties of all three approaches and perform an extensive empirical evaluation. We find that restricting the policy space is particularly effective, since it is computationally efficient, empirically effective at learning good policies, and convenient to use in practice.

\section{Related Work}
\label{gen_inst}
The problem of learning optimal policies by re-using logged data is referred to as off-policy learning. There has been considerable interest in developing efficient off-policy learning in the contextual bandit setting. However, to the best of our knowledge, no previous work has comprehensively investigated the problem of support deficiency, which is an important and pervasive problem in many real-world systems. The prior work on off-policy learning can be classified into two fundamentally different approaches. The first -- called Direct Modeling (DM) -- is based on a reduction to supervised learning, where a regression estimate is trained to predict rewards \citep{beygelzimer2009offset}. To derive a policy, the action with the highest predicted reward is chosen. A drawback of this simple approach is the bias that results from misspecification of the regression model. For real-world data, due to non-linearity or partial observability of the environment, regression models are often substantially misspecified. Hence, the DM approach often does not perform well empirically. 

The second approach is based on policy learning via ERM with a counterfactual risk estimator \citep{dudik2011doubly, Swaminathan/Joachims/15c, Swaminathan/Joachims/15d, su2019cab}. Inverse propensity score (IPS) weighting is one of the most popular estimators to be used as empirical risk. However, policy learning algorithms based on IPS and related estimators \citep{strehl2010learning,Swaminathan/Joachims/15c,Swaminathan/Joachims/15d, thomas2016data,london2019bayesian} require the assumption that the logging policy has full support for every policy in the policy space.  One exception is the work of \cite{liu2019off}. They relax the assumption to the existence of an optimal policy such that the logging policy covers the support of this optimal policy. However, this is an untestable assumption that does not provide guarantees for real-world applications.

More generally, batch learning from bandit feedback can be viewed as off-policy learning in the reinforcement learning (RL) literature, which considers learning optimal policies in the sequential decision-making setting. Similar to the approaches in contextual bandit learning, RL methods cluster into two categories: (1) value function based and (2) importance sampling based. For importance weighting based methods, such as policy gradient \citep{williams1992simple}, the variance of the underlying gradient grows exponentially with the horizon, making it highly undesirable. It is worth noting that this is fundamentally different for one-step contextual bandits, where importance-sampling based estimators are the most competitive ones. For value function based approaches \citep{jiang2016doubly, kumar2019stabilizing}, the objective is based on some estimate of the value function, either through fitting an MDP model and evaluating the value function based on the estimated MDP \citep{jiang2016doubly} (a.k.a model-based in RL) or using bootstrapping to find the value function for the optimal policy directly, such as Q-learning \citep{watkins1992q} (a.k.a model-free in RL). However, for model-based methods, the bias problem could be severe if a wrong model class is chosen, and the bias problem is very difficult to diagnose in general. On the other hand, for model-free methods, Sutton and Barto \citep{sutton2018reinforcement} identify a deadly triad of function approximation, bootstrapping, and off-policy learning. It emphasizes that function approximation equipped with Q-learning can diverge in the off-policy learning setting, making most off-policy RL methods very conservative in extrapolation because of the severe error-propagation issue \citep{laroche2017safe, fujimoto2018off}. In this work, we focus on the contextual bandit problem, which has many direct applications in recommender systems and online search. We also believe that for developing better off-policy estimators in RL, it is fundamental to first understand how to handle these cases in the more tractable, contextual-bandit case.

In this paper, we explore three approaches to addressing off-policy learning with support deficiency and discuss how existing approaches could be fit into this framework. First, our conservative extrapolation method is related to the method proposed by \cite{liu2019off}. They focus on the correction of the state distribution by defining an augmented MDP, and pessimistic imputation is used to get an estimate for policy-gradient learning. Second, our method of restricting the policy space uses a surrogate for the support divergence of two policies that was previously used as control variate in the SNIPS estimator \citep{Swaminathan/Joachims/15d}. It also appeared in the Lagrangian formulation of the BanditNet objective \citep{Joachims/etal/18a} and in the gradient update of the REINFORCE algorithm \citep{williams1992simple}. This connection gives interesting new insight that the baselines used in policy-gradient algorithms not only help to reduce variance in gradients \citep{greensmith2004variance}, but that they also connect to the problem of support deficiency in the off-policy setting.

\section{Off-policy Learning with Deficient Support}
\label{headings}

We start by formally defining the problem of learning a contextual-bandit policy in the BLBF setting. Input to the policy are contexts $\x \in \Xcal$ drawn i.i.d.\ from a fixed but unknown distribution $\Px$. Given context $\x$, the system executes a possibly stochastic policy $\pol(\Ycal|\x)$ that selects an action $\y \in \Ycal$. For this context and action pair, the system observes a reward $\rew \in [\rew_{min},\rew_{max}]$ from $\Prewxy$. Given a space of policies $\Pol$, the reward of any policy $\pol \in \Pol$ is defined as
\begin{equation}
\Rew(\pi) = \EE_{\x\sim P(x)}\EE_{\y \sim \pol(\y|\x)} \EE_{\rew \sim \Prewxy}[\rew].	
\end{equation}
In the BLBF setting, the learning algorithm is given a dataset 
$$\Dcal:=\{\x_i, \y_i, \rew_i, \polnull(\y_i|\x_i)\}_{i=1}^n$$
of past system interactions which consists of context-action-reward-propensity tuples. The propensity $\polnull(\y_i|\x_i)$ is the probability of selecting action $\y_i$ for context $\x_i$ under the policy $\polnull$ that was used to log the data. We call $\polnull$ the logging policy, and we will discuss desired conditions on the stochasticity of $\polnull$ in the following. The goal of off-policy learning is to exploit the information in the logged data $\Dcal$ to find a policy $\hat{\pol} \in \Pol$ that has high reward $\Rew(\hat{\pol})$.

Analogous to the ERM principle in supervised learning, off-policy learning algorithms typically optimize a counterfactual estimate $\hat{\Rew}(\pol)$ of $\Rew(\pol)$ as the training objective \citep{li2011unbiased,li2015counterfactual,bottou2013counterfactual,Swaminathan/Joachims/15c}. 
\begin{equation}
    \hat{\pol} = \argmax_{\pol \in \Pol} [ \hat{\Rew}(\pol)]
\end{equation}
For conciseness, we ignore additional regularization terms in the objective \citep{Swaminathan/Joachims/15c}, since they are irrelevant to the main point of this paper. As for counterfactual estimator $\hat{\Rew}(\pol)$, most algorithms rely on some form of IPS weighting \citep{ strehl2010learning,dudik2011doubly,Swaminathan/Joachims/15c,Swaminathan/Joachims/15d,wang2016optimal,su2019cab} to correct the distribution mismatch between the logging policy $\polnull$ and each target policy $\pol \in \Pol$.
\begin{equation}
\label{est:ips}
	\Rewips(\pol) = \frac{1}{n}\sum_{i=1}^n\frac{\pol(\y_i|\x_i)}{\polnull(\y_i|\x_i)}\rew_i.
\end{equation}  
A crucial condition for the effectiveness of the IPS estimator (and similar estimators like SNIPS \cite{Swaminathan/Joachims/15d}) is that the logging policy $\polnull$ assigns non-zero probability to all actions that have non-zero probability under the target policy $\pol$ we aim to evaluate. This condition is known as positivity or full support, and it is defined as follows. 
\begin{definition}[Full support]
The logging policy $\polnull$ is said to have full support for $\pol$ when $\polnull(y|x)>0$ for all actions $y\in \Ycal$ and contexts $x\in \Xcal$ for which $\pol(y|x)>0$.
\end{definition}
It is known that the IPS estimator is unbiased, $\EE_{\Dcal}[\Rewips(\pol)] = \Rew(\pol)$, if the logging policy $\polnull$ has full support for $\pol$ \citep{li2011unbiased}.

To ensure unbiased ERM learning, algorithms that use the IPS estimator require that the logging policy $\polnull$ has full support for all policies $\pol \in \Pol$ in the policy space. For sufficiently rich policy spaces, like deep-networks $f_w(\x,\y)$ with softmax outputs of the form
\begin{equation}
\label{softmax_policy}
\pol_w(\y|\x) = \frac{exp(f_w(\x,\y))}{\sum_{\y' \in \Ycal}exp(f_w(\x,\y'))},
\end{equation}
this means that the logging policy $\polnull$ needs to assign non-zero probability to every action $\y$ in every context $\x$. This is a strong condition that is not feasible in many real-world systems, especially if the action space is large and many actions have poor reward.

If the support requirement is violated, ERM learning can fail catastrophically. We will show in the following that the underlying reason is bias, not excessive variance that could be remedied through clipping or variance regularization \citep{strehl2010learning,Swaminathan/Joachims/15c}. To quantify how support deficient a logging policy is, we denote the set of unsupported actions for context $\x$ under $\polnull$ as
$$\Ucalxpol:=\{\y\in\Ycal|\polnull(\y|\x)=0\}.$$

The bias of the IPS estimator is then characterized by the expected reward on the unsupported actions.
\begin{restatable}{propos}{ipsbias}
\label{ipsbias}
	Given contexts $\x \sim P(\Xcal)$ and logging policy $\polnull(\Ycal|\x)$, the bias of $\Rewips$ for target policy $\pol(\Ycal|\x)$ is equal to the expected reward on the unsupported action sets, i.e., $$bias(\Rewips(\pi))=\EE_x\bigg[-\sum_{y\in \Ucalxpol}\pol(\y|\x) \delta(x,y)\bigg].$$
\end{restatable}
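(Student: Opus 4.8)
The plan is to compute the bias directly from its definition, $bias(\Rewips(\pol)) = \EE_{\Dcal}[\Rewips(\pol)] - \Rew(\pol)$, and show that the two terms agree on the supported actions and differ exactly on the unsupported set $\Ucalxpol$. First I would take the expectation of the IPS estimator over the logged data. Since the tuples $(\x_i,\y_i,\rew_i)$ are drawn i.i.d.\ with $\x_i \sim P(\Xcal)$, $\y_i \sim \polnull(\cdot|\x_i)$, and $\rew_i \sim \Prew{\rew}{\x_i}{\y_i}$, linearity of expectation collapses the average over the $n$ terms into a single per-sample expectation,
$$\EE_{\Dcal}[\Rewips(\pol)] = \EE_{\x}\,\EE_{\y\sim\polnull(\cdot|\x)}\,\EE_{\rew\sim\Prewxy}\!\left[\frac{\pol(\y|\x)}{\polnull(\y|\x)}\,\rew\right],$$
and integrating out the reward replaces $\rew$ by the conditional mean $\delta(x,y) := \EE_{\rew\sim\Prewxy}[\rew]$.

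The crucial step is to write the expectation over $\y \sim \polnull(\cdot|\x)$ as an explicit sum. Because the actions are \emph{sampled from $\polnull$}, this sum ranges only over actions with $\polnull(\y|\x)>0$, i.e.\ over $\Ycal \setminus \Ucalxpol$; the unsupported actions are never realized in the data and so contribute nothing. On the supported set the sampling weight $\polnull(\y|\x)$ cancels against the denominator of the importance weight, leaving
$$\EE_{\Dcal}[\Rewips(\pol)] = \EE_{\x}\!\left[\sum_{\y:\,\polnull(\y|\x)>0}\pol(\y|\x)\,\delta(x,y)\right].$$
This makes precise the conceptual content of the statement: in expectation, IPS reproduces the true reward \emph{restricted to the supported actions}.

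Finally I would subtract the true reward $\Rew(\pol) = \EE_{\x}\!\left[\sum_{\y\in\Ycal}\pol(\y|\x)\,\delta(x,y)\right]$, whose sum runs over the \emph{entire} action space. The two expressions share the identical summands $\pol(\y|\x)\,\delta(x,y)$, so their difference is supported precisely on the actions that IPS omits, namely those with $\polnull(\y|\x)=0$, yielding
$$bias(\Rewips(\pol)) = \EE_{\x}\!\left[-\sum_{\y\in\Ucalxpol}\pol(\y|\x)\,\delta(x,y)\right],$$
as claimed. The argument is essentially mechanical manipulation of expectations, so I anticipate no serious obstacle; the one point that genuinely requires care — and the reason the bias is nonzero — is the restriction of the action sum to $\mathrm{supp}(\polnull)$ when evaluating $\EE_{\Dcal}[\Rewips(\pol)]$, since the importance weight is only ever instantiated at sampled actions. (For a continuous action space the sums become integrals against a base measure, but the cancellation and the resulting mismatch on $\Ucalxpol$ are identical.)
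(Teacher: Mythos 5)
Your proposal is correct and follows essentially the same route as the paper's proof: expand $\EE_{\Dcal}[\Rewips(\pol)]$ into a per-sample expectation, note that the action sum is restricted to the supported set $\Ucalxpol^{c}$ where the $\polnull(\y|\x)$ factors cancel, and subtract $\Rew(\pol)$ so the difference collapses to the unsupported actions. The one point you flag as requiring care (the sum over $\y$ ranging only over $\mathrm{supp}(\polnull)$) is exactly the step the paper makes explicit, so there is no gap.
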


The proof is provided in Appendix~\ref{Appendix: proofapp1}. From Proposition~\ref{ipsbias}, it is clear that support deficient log data can drastically mislead ERM learning. To quantify the effect of support deficiency on ERM, we define the support divergence between a logging policy $\polnull$ and a target policy $\pol$ as follows.

\begin{definition}[Support Divergence] For contexts $x\sim P(\Xcal)$ and any corresponding pair of target policy $\pol$ and logging policy $\polnull$, the Support Divergence is defined as
\begin{equation}
\SDxpol := \EE_{\x \sim P(\Xcal)}\bigg[\sum_{y\in \Ucalxpol} \pol(\y|\x)\bigg].
\end{equation} 	
\end{definition}
With this definition in hand, we can quantify the effect of support deficiency on ERM learning for a policy space $\Pol$ under logging policy $\polnull$.

\begin{restatable}{thm}{ipsermexp}
\label{ipsermexp}
For any given hypothesis space $\Pi$ with logging policy $\pi_0\in \Pi$, there exists a reward distribution $\Pcal_r$ with support in $[r_{min}, r_{max}]$ such that in the limit of infinite training data, ERM using IPS over the logged data $\Dcal\sim P(\Xcal)\times \pi_0(\cdot|\Xcal)\times \Pcal_r$ can select a policy $\hat{\pol} \in \argmax_{\pi\in\Pol}\EE_{\Dcal}[\hat{R}_{IPS}(\pi)]$ that is at least $$(\rew_{max}-\rew_{min}) \max_{\pol \in \Pol}\SDxpol$$ suboptimal.
\end{restatable}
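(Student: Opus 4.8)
The plan is to convert the bias formula of Proposition~\ref{ipsbias} into an adversarial reward construction. First I would pass to the infinite-data limit: by the law of large numbers $\Rewips(\pol)$ concentrates on its mean $\EE_{\Dcal}[\Rewips(\pol)] = \Rew(\pol) + \mathrm{bias}(\Rewips(\pol))$, so substituting Proposition~\ref{ipsbias} the limiting ERM objective becomes
\begin{equation*}
\EE_{\Dcal}[\Rewips(\pol)] = \Rew(\pol) - \EE_{\x}\Big[\sum_{\y \in \Ucalxpol}\pol(\y|\x)\,\delta(\x,\y)\Big],
\end{equation*}
which is precisely the expected reward of $\pol$ computed over only the supported actions. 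The structural fact I would lean on is that this limiting objective is completely blind to the rewards assigned on the unsupported set $\Ucalxpol$, since those actions have zero logging probability and hence never appear in $\Dcal$.

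Next I would exploit that blindness. Let $\pol^\star \in \argmax_{\pol\in\Pol}\SDxpol$ attain the maximal support divergence $\mathcal{D}^\star := \max_{\pol\in\Pol}\SDxpol$, and choose $\Pcal_r$ to put the maximal reward $\rew_{max}$ on every unsupported action and the minimal reward $\rew_{min}$ on every supported action. Then the limiting objective collapses to $\rew_{min}\,(1-\SDxpol)$, a function of supported mass alone, and the true reward is $\Rew(\pol)=\rew_{min}+(\rew_{max}-\rew_{min})\SDxpol$. Under this construction the logging policy $\polnull$, for which $\SD{\Xcal}{\polnull}{\polnull}=0$, maximizes the limiting ERM objective and may therefore be selected as $\hat\pol$, while $\pol^\star$ is the genuine optimum. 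Evaluating $\Rew$ on both gives
\begin{equation*}
\Rew(\pol^\star) - \Rew(\polnull) = (\rew_{max}-\rew_{min})\,\mathcal{D}^\star,
\end{equation*}
so the selected $\hat\pol=\polnull$ is at least $(\rew_{max}-\rew_{min})\max_{\pol\in\Pol}\SDxpol$ suboptimal, as claimed.

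The step I expect to be the main obstacle is making the ``can select'' claim airtight, namely certifying that the intended bad policy genuinely lies in $\argmax_{\pol\in\Pol}\EE_{\Dcal}[\Rewips(\pol)]$ rather than merely scoring competitively. Because the limiting objective depends only on the supported part of each policy, it induces many ties, and the argument must guarantee that a low-divergence policy sits in the tied set while no strictly better alternative exists; this is exactly where the sign of the supported-action reward enters, since $\polnull$ maximizes $\rew_{min}(1-\SDxpol)$ only when $\rew_{min}\ge 0$ (the standard normalized-reward regime), and the opposite sign requires instead selecting a maximal-divergence policy via the symmetric reward assignment. A secondary technicality is justifying the exchange of the infinite-data limit with the $\argmax$, which for a fixed compact $\Pol$ I would obtain from uniform convergence of $\Rewips$ to its mean and would record as the precise meaning of ``in the limit of infinite training data.''
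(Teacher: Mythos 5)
Your proposal is correct and follows essentially the same route as the paper's proof: the identical adversarial reward assignment ($\rew_{max}$ on unsupported actions, $\rew_{min}$ on supported ones), the same observation that the limiting IPS objective only sees the supported probability mass, and the same conclusion that a zero-divergence policy such as $\polnull$ lies in the ERM argmax while a maximal-divergence policy is the true optimum, which yields the $(\rew_{max}-\rew_{min})\max_{\pol\in\Pol}\SDxpol$ gap. The sign caveat you flag is genuine but is equally present in the paper's own argument, whose step $r_{min}\,\EE_x\big[\sum_{y\in\Ucalxpol^c}\pol(y|x)\big]\le r_{min}$ silently assumes $r_{min}\ge 0$; your remark that the opposite sign requires the reversed assignment (so that ERM can select a maximal-divergence policy, as in the paper's own $[-1,0]$ illustration) is the right repair for that regime.
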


\begin{proof}
For any given hypothesis space $\Pi$ and logging policy $\pi_0$, define a deterministic reward distribution $\Pcal_r$ supported in $[r_{min}, r_{max}]$ as following: for all context $x$, $r(x,y)=\delta(x,y)=r_{min}$ for $y\in \Ucal(x,\pi_0)^c$ and $r(x,y)=\delta(x,y)=r_{max}$ for $y\in \Ucal(x,\pi_0)$. Let $\tilde{\pi}\in\argmax_{\pol \in \Pol}\SDxpol$ and $\pi^{*}\in\argmax_{\pi\in\Pi}R(\pi)$, then we have the following lower bound for $R(\pi^{*})$:
\begin{equation}
\begin{split}
    R(\pi^{*})&\geq R(\tilde{\pi})\\
    &=\EE_x\bigg[\sum_{y\in \Ucal(x,\pi_0)}r_{max}+\sum_{y\in\Ucal(x,\pi_0)^c} r_{min}\bigg]\\
    &=r_{max}\max_{\pol \in \Pol}\SDxpol + r_{min}(1-\max_{\pol \in \Pol}\SDxpol)
\end{split}
\end{equation}
where the first inequality follows from the definition of $\pi^{*}$, the first and second equality is based on the specific reward distribution $\Pcal_r$ and the definition of $\tilde{\pi}$.

In the following we will show that for any $\hat{\pi}$ learned by the expectation of ERM (or in the limit of infinite amount data), i.e., $\hat{\pi}\in\argmax \EE_{\Dcal}[\hat{R}_{IPS}(\pi)]$, $\hat{\pi}$ have the same support as $\pi_0$. 

\begin{equation}
\begin{split}
 \EE_{\Dcal}[\hat{R}_{IPS}(\pi)] &= \EE_x\bigg[ \sum_{y\in \Ucal(x,\pi_0)^c}\pi(y|x)r_{min}\bigg]\\
 &= r_{min} \EE_x\bigg[ \sum_{y\in \Ucal(x,\pi_0)^c}\pi(y|x)\bigg]\leq r_{min}
 \end{split}
\end{equation}
for all $\pi\in\Pi$, then it is easy to see $\pi_0\in\Pi$ is one of the solution of ERM. Actually for any $ \hat{\pi}\in\argmax\EE_{\Dcal}[\hat{R}^{\Dcal}_{IPS}(\pi)]$, $$\EE_x\bigg[ \sum_{y\in \Ucal(x,\pi_0)^c}\pi(y|x)\bigg]=1$$ and it gives us that any solution of the ERM has exactly the same support as $\pi_0$, then we have $R(\hat{\pi})=r_{min}$ for $\hat{\pi}\in\argmax\EE_{\Dcal}[\hat{R}_{IPS}(\pi)]$.

Combining the lower bound for $R(\pi^{*})$ and $R(\hat{\pi})=r_{min}$, we have
\begin{equation}
\begin{split}
   R(\pi^{*}) - R(\hat{\pi}) &\geq r_{max}\max_{\pol \in \Pol}\SDxpol \\
   &+ r_{min}(1-\max_{\pol \in \Pol}\SDxpol)-r_{min}\\ &=(\rew_{max}-\rew_{min}) \max_{\pol \in \Pol}\SDxpol
\end{split}
\end{equation}
\end{proof}
To illustrate the theorem, consider a problem with rewards $\rew \in [-1,0]$. Furthermore, consider a policy space $\Pol$ that contains a good policy $\pol_g$ with $\Rew(\pol_g)=-0.1$ and a bad policy $\pol_b$ with $\Rew(\pol_b)=-0.7$. 
If policy $\pol_b$ has support divergence
$\SD{\Xcal}{\pol_b}{\polnull}=0.6$ or larger, 
then ERM may return the bad $\pol_b$ instead of $\pol_g$ even with infinite amounts of training data.

Note that it is sufficient to merely have one policy in $\Pol$ that has large support deficiency to achieve this suboptimality. It is therefore crucial to control the support divergence $\SDxpol$ uniformly over all $\pol \in \Pol$, or to account for the suboptimality it can induce. To this effect, we explore three approaches in the following. 

\subsection{Safe learning by restricting the action space}

The first and arguably most direct approach to reducing $\SDxpol$ is to disallow any action that has zero support under the logging policy. For the remaining action set, the logging policy has full support by definition. This restriction of the action set can be achieved by transforming each policy $\pol \in \Pol$ into a new policy that sets the probability of the unsupported actions to zero.
\begin{equation}
\label{est:action_res}
    \pol(\y|\x) \longrightarrow \polres(\y|\x):=\frac{\pol(\y|\x)\ind_{\{\y\notin\Ucalxpol\}}}{1-\sum_{y'\in\Ucalxpol}\pol(\y'|\x)}
\end{equation}
This results in a new policy space $\Polres$. All $\polres \in \Polres$ have support divergence of zero $\SDx{\polres}{\polnull} = 0$ and ERM via IPS is guaranteed to be unbiased.

While this transformation of the policy space from $\Pol$ to $\Polres$ is conceptually straightforward, it has two potential drawbacks. First, restricting the action space without any exceptions may overly constrain the policies in $\Polres$. In particular, if the optimal action $\y^*$ for a specific context $\x$ does not have support under the logging policy, no $\polres \in \Polres$ can ever choose $\y^*$ even if there are many observations of similar $\y$'s on similar context $\x'$. The second drawback is computational. For every context $\x$ during training and during testing, the system needs to evaluate the logging policy $\polnull(\y|\x)$ to compute the transformation from $\pol$ to $\polres$. This can be prohibitively expensive especially at test time, where -- after multiple rounds of off-policy learning with data from previously learned policies -- we would need to evaluate the whole sequence of previous logging policies to execute the learned policy.

\subsection{Safe learning through reward extrapolation}

As illustrated above, support deficiency is a problem of blind spots where we lack information about the rewards of some actions in some contexts. Instead of disallowing the unsupported actions like in the previous section, an alternative is to extrapolate the observed rewards to fill in the blind spots. To this effect, we propose the following augmented IPS estimator that imputes an extrapolated reward $\rewhatxy(\x,\y)$ for each unsupported action $\y \in \Ucalxpol$. 

\begin{equation}
\label{imp}
\Rewipsaug(\pol) = \frac{1}{n}\sum_{i=1}^n\bigg[\frac{\pol(\y_i|\x_i)}{\polnull(\y_i|\x_i)}\rew_i+\sum_{\y\in\Ucalpol{\x_i}}\pol(\y|\x_i)\rewhatxy(\x_i,\y)\bigg]
\end{equation}
In general, $\hat{\delta}(x,y)$ can be any function that maps $\Xcal\times\Ycal$ to $\RR$. The higher the quality of $\hat{\delta}(x,y)$, the better the evaluation accuracy of the associated augmented IPS estimator. In the following proposition, we formally characterize the bias of the augmented IPS estimator for any given reward extrapolation $\rewhatxy(\x,\y)$. We denote the mean of the reward $\rew$ for context $\x$ and action $\y$ with $\rewxy(x,y) = \EE_{\rew \sim \Prewxy}[\rew]$. Furthermore, let $\Delta(\x,\y):=\rewhatxy(\x,\y)-\rewxy(\x,\y)$ denote the error of the reward extrapolation for each $\x$ and $\y$.
\begin{restatable}{propos}{ipsregbias}
\label{ipsregbias}
	Given contexts $x_1, x_2, \dots, x_n$ drawn i.i.d from the unknown distribution $P(\Xcal)$, for action $y_i$ drawn independently from logging policy $\pi_0$ with probability $\pi_0(y_i|x_i)$, the bias of the empirical risk defined in Equation~\eqref{imp} is $\EE_x[\sum_{y\in\Ucalpol{x}}\pi(y|x)\Delta(x,y)]$. 
\end{restatable}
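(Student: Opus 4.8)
The plan is to exploit the additive structure of the estimator in Equation~\eqref{imp}: it is the ordinary IPS estimator $\Rewips(\pol)$ plus a per-context imputation correction, namely $\frac{1}{n}\sum_{i=1}^n\sum_{\y\in\Ucalpol{\x_i}}\pol(\y|\x_i)\hat\delta(\x_i,\y)$. Since bias is a linear functional of the estimator, I would write $\mathrm{bias}(\Rewipsaug(\pol))$ as the bias of $\Rewips(\pol)$ plus the expectation of this correction term, and then show that the blind-spot contributions of the two pieces collapse into $\EE_\x[\sum_{\y\in\Ucalxpol}\pol(\y|\x)\Delta(\x,\y)]$.

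First I would invoke Proposition~\ref{ipsbias}, which already gives $\mathrm{bias}(\Rewips(\pol)) = \EE_\x[-\sum_{\y\in\Ucalxpol}\pol(\y|\x)\delta(\x,\y)]$. The intuition is that importance reweighting can only recover expected reward on actions that $\polnull$ actually plays, so the plain IPS estimator systematically omits the reward mass that $\pol$ places on the unsupported set $\Ucalxpol$.

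Next I would take the expectation of the imputation correction. The one point that deserves care --- and really the only subtlety in the argument --- is that each summand $\sum_{\y\in\Ucalpol{\x_i}}\pol(\y|\x_i)\hat\delta(\x_i,\y)$ is a deterministic function of the context $\x_i$ alone: it does not depend on the logged action $\y_i$ or the observed reward $\rew_i$. Consequently no propensity reweighting enters, and by the i.i.d.\ assumption on the contexts its expectation over $\Dcal$ is simply $\EE_\x[\sum_{\y\in\Ucalxpol}\pol(\y|\x)\hat\delta(\x,\y)]$.

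Finally I would add the two contributions. Both are expectations over $\x$ of sums ranging over the same unsupported set $\Ucalxpol$ and carrying the same weight $\pol(\y|\x)$, so they combine termwise to give
\[
\mathrm{bias}(\Rewipsaug(\pol)) = \EE_\x\bigg[\sum_{\y\in\Ucalxpol}\pol(\y|\x)\big(\hat\delta(\x,\y)-\delta(\x,\y)\big)\bigg],
\]
which, by the definition $\Delta(\x,\y)=\hat\delta(\x,\y)-\delta(\x,\y)$, is exactly $\EE_\x[\sum_{\y\in\Ucalxpol}\pol(\y|\x)\Delta(\x,\y)]$. The main obstacle here is conceptual rather than computational: recognizing that the imputation term needs no importance weighting because it is context-deterministic, and that its index set coincides precisely with the blind-spot set dropped by IPS, so that the two pieces cancel down to the extrapolation error $\Delta$.
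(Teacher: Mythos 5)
Your proposal is correct and follows essentially the same route as the paper: both arguments reduce to computing $\EE_{\Dcal}[\Rewipsaug(\pol)]$ by linearity, observing that the IPS part recovers $\EE_\x[\sum_{\y\in\Ucalpolc{\x}}\pol(\y|\x)\rewxy(\x,\y)]$ while the imputation term contributes $\EE_\x[\sum_{\y\in\Ucalxpol}\pol(\y|\x)\rewhatxy(\x,\y)]$, and subtracting $\Rew(\pol)$ so that the supported part cancels. The only cosmetic difference is that you invoke Proposition~\ref{ipsbias} for the IPS contribution whereas the paper recomputes that expectation inline; the calculation is otherwise identical.
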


The proof is provided in Appendix~\ref{Appendix: proofapp2}. In this way we can learn in the original action and policy space, but mitigate the effect of the support deficiency by explicitly incorporating the extrapolated reward $\rewhatxy(\x,\y)$. We explore two choices for $\rewhatxy(\x,\y)$ in the following, which provide different types of guarantees.
\vspace{0.2cm}
\newline
\textbf{Conservative Extrapolation.} In practice, the logging policy is likely to put zero probability on actions that have low reward, since this minimizes the user impact of data collection. This means that precisely those bad actions are likely to not be supported in the logging policy. A key danger of blind spots regarding those actions is that naive IPS training will inadvertently learn a policy that selects those actions. This can be avoided by being maximally conservative about unsupported actions and imputing the lowest possible reward \cite{liu2019off}.
$$\forall x\: \forall \y \in \Ucalxpol: \rewhatxy(\x,\y)=\rew_{min}$$

Intuitively, by imposing the worst possible reward for the unsupported actions, the learning algorithm will aim to avoid these low-reward areas. However, unlike for the $\polres$ policies resulting from the restricted action space, the learned policy is not strictly prohibited from choosing unsupported actions -- it is merely made aware of the maximum loss that the action may incur. Note that for problems where $\rew_{min}=0$, the naive IPS estimator is identical to conservative extrapolation since the second term in Equation~\eqref{imp} is zero.
\vspace{0.2cm}
\newline
\textbf{Regression Extrapolation.}
Instead of extrapolating with the worst-case reward, we may have additional prior knowledge in the form of a model-based estimate that reduces the bias. In particular, we explore using a regression estimate $$\rewhatxy = \argmin_{\rewregxy}\frac{1}{n}\sum_{i=1}^n(\rewregxy(\x_i,\y_i)-\rew_i)^2$$ that extrapolates from the observed data $\Dcal$.
Typically, $\rewregxy$ comes from a parameterized class of regression functions (e.g. linear, deep networks).\footnote{In our experiments, we use deep neural networks with details shown in Appendix~\ref{Appendix:experiment}.} Other regression objectives could also be used, such as weighted linear regression that itself uses importance sampling as weights \citep{farajtabar2018more}. But, fundamentally, all regression approaches assume that the regression model is not misspecified and that it can thus extrapolate well. 

Note that the IPS part of Equation~\eqref{imp} can be exchanged for other estimators. In particular, we note that doubly robust (DR) \citep{dudik2011doubly} naturally performs a form of regression extrapolation. As the following decomposition shows, DR imputes the extrapolated reward $\rewhatxy(\x_i,\y)$ for the unsupported actions $y\in \Ucalpol{\x_i}$.
\begin{eqnarray}
\label{est:augdr}
    \Rewdraug \!\!&\!\!\!\!\!\!=\!\!\!\!\!\!&\!\! \frac{1}{n}\sum_{i=1}^n \bigg[\sum_{y\in\Ycal} \pol(\y|\x_i)\rewhatxy(\x_i,\y) + \frac{\pol(\y_i|\x_i)}{\polnull(\y_i|\x_i)}(\rew_i-\rewhatxy(\x_i,\y_i))\bigg]  \nonumber \\
    & \!\!\!\!\!\!=\!\!\!\!\!\! &\!\!\frac{1}{n}\sum_{i=1}^n\bigg[\sum_{\y\in\Ucalpolc{\x_i}}\!\!\!\!\!\!\!\!\pol(\y|\x_i)\rewhatxy(\x_i,\y) + \frac{\pol(\y_i|\x_i)}{\polnull(\y_i|\x_i)}(\rew_i-\rewhatxy(\x_i,\y_i)) \nonumber \\
    & &+\sum_{\y\in\Ucalpol{\x_i}}\!\!\!\!\!\!\!\!\pol(\y|\x_i)\rewhatxy(\x_i,\y)\bigg]
\end{eqnarray}
A similar decomposition also exists for the CAB \citep{su2019cab} estimator, showing that both DR and CAB belong to the class of Regression Extrapolation estimators. 

\begin{algorithm}[t]
\SetAlgoLined
input: original logged dataset $\Dcal$, replaycount $k$, reward estimate $\hat{\delta}(x,y)$; \\
output: additional augmented dataset $\Dcal'$\;
 initialization: $\Dcal' = \emptyset$ \;
 \For{$j=1,\dots,k$}{
  \For{$i=1,\dots,n$}{
  	Define $U_{x_i}$ to be the uniform distribution over $\Ucal(x_i,\pi_0)$\;
    Draw $y\sim U_{x_i}$\;
    $\Dcal' = \Dcal' \bigcup \{x_i, y, \hat{\delta}(x_i,y), \frac{1}{|\Ucal(x_i,\pi_0)|}\}$\;
  }{
  }
 }
\caption{Data Augmentation}
\label{data_aug}
\end{algorithm}
\noindent
\newline
\textbf{Efficient Approximation.}
Evaluating the augmented IPS estimator from Equation~\eqref{imp} can be computationally expensive if the number of unsupported actions in $\Ucalxpol$ is large. To overcome this problem, we propose to use sampling to estimate the expected reward on the unsupported actions, which can be thought of as augmenting the dataset $\Dcal$ with additional observations where the logging policy has zero support. In particular, we propose the data-augmentation procedure detailed in Algorithm~\ref{data_aug}.
With the additional bandit data $\Dcal'=\{\x'_j, \y'_j, \rewhatxy(\x'_j,\y'_j), \prop'_j\}_{j=1}^m$ from Algorithm~\ref{data_aug}, the new objective is \footnote{For the rest of the paper, we will use maximizing reward or minimizing loss interchangeably.}
\begin{equation}
\label{erm_samp}
	\argmin_{\pol \in \Pol} \left\{\frac{1}{n}\sum_{i=1}^n \frac{\pol(\y_i|\x_i)}{\polnull(\y_i|\x_i)}\rew_i +\frac{1}{m}\sum_{j=1}^m \frac{\pol(\y'_j|\x'_j)}{\prop'_j}\rewhatxy(\x'_j,\y'_j)\right\}
\end{equation}

In Appendix~\ref{Appendix:effapp}, we show that the empirical risk in Equation~\eqref{erm_samp} has the same expected value (over randomness in $\Dcal$ and $\Dcal'$) as $\Rewipsaug(\Dcal)$ and can thus serve as an approximation for Equation~\eqref{imp}.

\subsection{Safe Learning by Restricting the Policy Space}
\label{policyres}
As motivated by Theorem~\ref{ipsermexp}, the risk of learning from support deficient data scales with the maximum support divergence $\SDxpol$ among the policies in the policy space $\Pol$. Therefore, our third approach restricts the policy space to the subset $\Pol^{\kappa} \subset \Pol$ that contains the policies $\pol \in \Pol$ with an acceptably low support divergence $\SDxpol \le \kappa$.
\begin{equation}
	\Pol^{\kappa} =	\left\{\pol \in \Pol| \SDxpol \leq \kappa\right\}
\end{equation}
The parameter $\kappa$ has an intuitive meaning. It specifies the maximum probability mass that a learned policy can place on unsupported actions. Typically the choice of $\kappa$ is application dependent, limiting the maximum bias of the ERM procedure according to Proposition~\ref{ipsregbias} while not explicitly torquing the rewards like in conservative reward imputation. 
A key challenge, however, is implementing this restriction of the hypothesis space, such that the ERM learner $$\hat{\pol} = \argmax_{\pol \in \Pol^\kappa} [ \Rewips(\pol) ]$$ only considers the subset $\Pol^{\kappa} \subset \Pol$.

In particular, we do not have access to the context distribution $P(\Xcal)$ for calculating $\SDxpol$, nor would it be possible to enumerate all $\pol \in \Pol$ to check the condition $\SDxpol \le \kappa$, which itself requires a possibly infeasible iteration over all actions. The following theorem gives us an efficient way of estimating and controlling $\SDxpol$ without explicit knowledge of $P(\Xcal)$ or access to the logging policy $\polnull$ beyond the logged propensities.

\begin{restatable}{thm}{cvsd}
For contexts $x_i$ drawn i.i.d from $P(\Xcal)$, action $y_i$ drawn from logging policy $\pi_0(\Ycal|x_i)$, we define $S_{\Dcal}(\pol|\polnull) = \frac{1}{n}\sum_{i=1}^n \frac{\pol(\y_i|\x_i)}{\polnull(\y_i|\x_i)}$. For any policy $\pi$ it holds that
\begin{equation}
\EE_{\x\sim P(\Xcal)}\EE_{\y\sim\polnull(\Ycal|\x)}[S_{\Dcal}(\pi|\pi_0)] + \SDxpol = 1	
\end{equation}
\end{restatable}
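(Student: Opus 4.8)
The plan is to exploit the i.i.d.\ structure together with the fact that the propensity weighting, once placed inside an expectation over $\y \sim \polnull$, masks out exactly the unsupported actions. First I would use linearity of expectation and the fact that the $n$ draws $(\x_i,\y_i)$ are identically distributed to collapse the expectation of the empirical average $\EE_{\Dcal}[S_{\Dcal}(\pol|\polnull)]$ to a single representative term. The entire statement then reduces to showing that $\EE_{\x\sim P(\Xcal)}\EE_{\y\sim\polnull(\Ycal|\x)}\!\left[\frac{\pol(\y|\x)}{\polnull(\y|\x)}\right] = 1 - \SDxpol$, after which rearranging yields the claimed identity.

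The central step is to evaluate the inner expectation over $\y \sim \polnull(\Ycal|\x)$ for a fixed context $\x$. Writing it as a weighted sum $\sum_{\y} \polnull(\y|\x)\,\frac{\pol(\y|\x)}{\polnull(\y|\x)}$, the logging propensity in the weight cancels the one in the denominator on every action with positive support, leaving $\sum_{\y \notin \Ucalxpol} \pol(\y|\x)$. Since $\pol(\Ycal|\x)$ is a probability distribution summing to one over all of $\Ycal$, this supported mass equals $1 - \sum_{\y \in \Ucalxpol} \pol(\y|\x)$.

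The one subtlety I would be careful about, and the only place the argument could look fishy, is the ratio $\pol(\y|\x)/\polnull(\y|\x)$ for $\y \in \Ucalxpol$, where $\polnull(\y|\x) = 0$ makes the ratio formally $0/0$. The resolution is that such actions are never drawn under $\y \sim \polnull(\Ycal|\x)$, so they carry zero weight and simply never enter the sum; the inner expectation is therefore well defined and equals the supported mass above, regardless of how one chooses to define the ratio on the unsupported set. Finally, taking the outer expectation over $\x \sim P(\Xcal)$ and using linearity, $\EE_{\x}[1 - \sum_{\y \in \Ucalxpol}\pol(\y|\x)] = 1 - \EE_{\x}[\sum_{\y \in \Ucalxpol}\pol(\y|\x)] = 1 - \SDxpol$ directly from the definition of the support divergence. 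Adding $\SDxpol$ to both sides gives the result. I expect no genuine obstacle: the proof is essentially a single cancellation, with the entire content residing in handling the $0/0$ bookkeeping correctly and recognizing the support divergence in the final expectation.
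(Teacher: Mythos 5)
Your proposal is correct and follows essentially the same route as the paper's proof: expand the expectation of the empirical average into the per-context weighted sum over supported actions, cancel the propensity, and recognize that the supported mass plus $\SDxpol$ is the total probability mass $\EE_x[\sum_{y\in\Ycal}\pi(y|x)]=1$. Your explicit handling of the $0/0$ issue on $\Ucalxpol$ is a point the paper handles implicitly by restricting the sum to $\Ucal(x,\pi_0)^c$ from the outset, but the argument is the same.
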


The proof is shown in Appendix~\ref{Appendix:theo2}. Using this theorem, the following proposition 
gives us an efficient way of implementing the constraint $\SDxpol \le \kappa$ via $1-S_{\Dcal}(\pol|\polnull)$.

\begin{restatable}{propos}{cvest}
\label{cvest}
    For any given $\kappa\in(0,1)$, and for $\epsilon$ with $0<\epsilon<\kappa/2$, let $p_{min}$ denote the minimum propensity on the supported set with $p_{min}=min_{x,y\in\Ucal(x,\pi_0)^c}\pi_0(y|x)$, then with probability larger than $1-2\exp(-2n\epsilon^2 p^2_{min})$, the constraint $1-\kappa+\epsilon\leq S_{\Dcal}(\pol|\polnull) \leq 1-\epsilon$ will ensure $0\leq \SDxpol\leq \kappa$.
\end{restatable}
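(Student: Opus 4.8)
The plan is to reduce the statement to a single application of Hoeffding's inequality, using the theorem stated immediately above to translate the observable empirical quantity $S_{\Dcal}(\pol|\polnull)$ into a statement about $\SDxpol$. First I would invoke that theorem, which gives $\EE_{\x\sim P(\Xcal)}\EE_{\y\sim\polnull}[S_{\Dcal}(\pol|\polnull)] = 1 - \SDxpol$. Writing $\mu$ for this population expectation, the target conclusion $0 \le \SDxpol \le \kappa$ is exactly equivalent to $1 - \kappa \le \mu \le 1$. So it suffices to sandwich the population mean $\mu$ once the empirical average $S_{\Dcal}(\pol|\polnull)$ has been controlled by concentration.

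The key observation, and the step on which everything hinges, is that each summand $W_i := \pol(\y_i|\x_i)/\polnull(\y_i|\x_i)$ is bounded. Since $\y_i$ is drawn from $\polnull(\Ycal|\x_i)$, it lies in the supported set $\Ucalpolc{\x_i}$ almost surely, so $\polnull(\y_i|\x_i) \ge p_{min}$ by the definition of $p_{min}$; combined with $\pol(\y_i|\x_i) \le 1$ this yields $W_i \in [0, 1/p_{min}]$. The tuples $(\x_i,\y_i)$ are i.i.d., hence so are the $W_i$, and $S_{\Dcal}(\pol|\polnull) = \tfrac1n\sum_i W_i$ is their empirical mean with expectation $\mu$. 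This is the only place where support deficiency could bite: if $\y_i$ could fall in $\Ucalpol{\x_i}$ the ratio would be unbounded, and it is precisely the fact that $\polnull$ never samples unsupported actions that rescues boundedness.

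Next I would apply Hoeffding's inequality to the $n$ i.i.d. variables $W_i \in [0,1/p_{min}]$ with deviation $\epsilon$. The ranges contribute $\sum_i (1/p_{min})^2 = n/p_{min}^2$ in the denominator of the exponent, giving
\[
\Pr\!\left[\,\lvert S_{\Dcal}(\pol|\polnull) - \mu\rvert \ge \epsilon \,\right] \;\le\; 2\exp\!\left(-2n\epsilon^2 p_{min}^2\right),
\]
which matches the stated failure probability exactly. On the complementary event, of probability larger than $1 - 2\exp(-2n\epsilon^2 p_{min}^2)$, we have $S_{\Dcal}(\pol|\polnull) - \epsilon < \mu < S_{\Dcal}(\pol|\polnull) + \epsilon$.

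Finally I would combine this two-sided deviation bound with the imposed constraint $1 - \kappa + \epsilon \le S_{\Dcal}(\pol|\polnull) \le 1 - \epsilon$. The lower side gives $\mu > S_{\Dcal}(\pol|\polnull) - \epsilon \ge 1 - \kappa$, hence $\SDxpol = 1 - \mu < \kappa$; the upper side gives $\mu < S_{\Dcal}(\pol|\polnull) + \epsilon \le 1$, hence $\SDxpol = 1 - \mu > 0$. Together these yield $0 \le \SDxpol \le \kappa$ on the high-probability event, as claimed. I would also note that the hypothesis $\epsilon < \kappa/2$ is exactly what makes the constraint interval $[1-\kappa+\epsilon,\,1-\epsilon]$ nonempty, since its length is $\kappa - 2\epsilon > 0$, so the restriction is non-vacuous. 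The only genuine obstacle is the boundedness argument of the second paragraph; after that the result is a routine Hoeffding sandwich.
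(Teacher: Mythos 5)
Your proof is correct and follows essentially the same route as the paper's: both rest on the identity $\EE[S_{\Dcal}(\pol|\polnull)] = 1-\SDxpol$ from the preceding theorem, the boundedness of each importance weight in $[0,1/p_{min}]$ because $\polnull$ only samples supported actions, and a Hoeffding concentration step combined with the imposed constraint. The only cosmetic difference is that the paper applies two one-sided Hoeffding bounds and a union bound where you use a single two-sided bound, which yields the identical failure probability $2\exp(-2n\epsilon^2 p_{min}^2)$.
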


The proof is provided in Appendix~\ref{Appendix:prop3}. We can thus use $1-S_{\Dcal}(\pol|\polnull)$ as a surrogate for $\SDxpol$ in the IPS training objective (or similar objectives like DR, clipped IPS, or CAB). 
\begin{equation} 
\begin{split}
\label{polressur}
	&\argmin_{\pi_w \in \Pol} \frac{1}{n}\sum_{i=1}^n \frac{\pi_w(y_i|x_i)}{\pi_0(y_i|x_i)}r_i \\
	&\text{    subject to    } 1-\kappa+\epsilon
	 \leq \frac{1}{n}\sum_{i=1}^n \frac{\pi_w(y_i|x_i)}{\pi_0(y_i|x_i)}
	 \leq 1-\epsilon
\end{split}
\end{equation}

Using Lagrange multipliers, an equivalent dual form of Equation~\eqref{polressur} is:
\begin{equation}
\label{est: policy_res}
\max_{u_1, u_2\geq 0}\min_{\pi_w\in\Pol}	\frac{1}{n}\sum_{i=1}^n \frac{\pi_w(y_i|x_i)}{\pi_0(y_i|x_i)} (r_i+u_1-u_2)-u_1(1-\epsilon)+u_2(1-\kappa+\epsilon) 
\end{equation}
For each fixed $(u_1, u_2)$ pair, the inner minimization objective is ERM with  IPS where the reward is shifted by $k=(u_1-u_2)$. So, we can select $k$, solve \eqref{est: policy_res}, and compute the corresponding $\kappa$ afterwards. To achieve a desired $\kappa$, we can use any suitable root finding method for $k$. We simply perform a grid search over $k=u_1-u_2$.

\begin{figure*}[tb]
\centering
  \includegraphics[width=18cm,height=9cm]{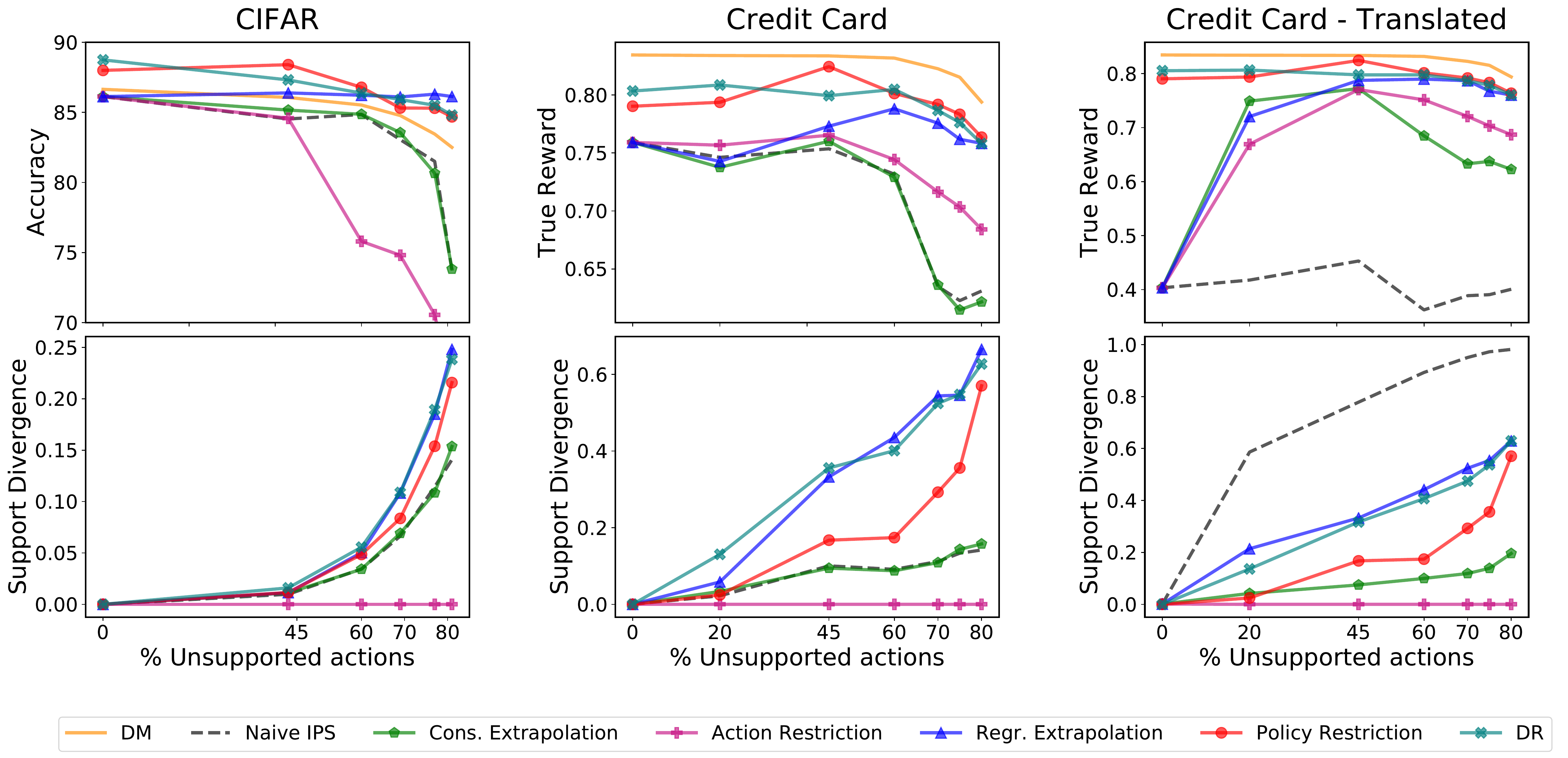} 
    \caption{Test set accuracy and support divergence of the learned policies as we pick logging policies that have increasing fractions of unsupported actions.}
  \label{tao_vs_oracle}
\end{figure*}

\paragraph{Empirical Model Selection for $k$.} While we may have a desired risk tolerance $\kappa$ for selecting $k$ in some applications, on others we may want to select the $k$ that maximize performance on a validation set. This again requires some estimate of the reward on the unsupported actions. We thus explore Conservative Extrapolation, DM, and the following model-independent approach we call \emph{\ms}. 
\ms\ aims to get the best model-free estimate of the reward on the unsupported-action set that the available data admits. In particular, we construct a minimally supported policy $\pi_{MinSup}$ that is closest to a policy that only takes unsupported actions while still having full support. The construction is as follows: for each context $x$, we greedily put all the probability $p$ on the action that has the lowest propensity, while keeping the IPS weights ($\frac{p}{\pi_0(y|x_i)}$) to be bounded by 100. If $p\leq 1$, we continue this procedure with the next-lowest propensity until we have distributed all of the probability mass. We then estimate the value of the constructed policy $\pi_{MinSup}$ using IPS to arrive at the following estimator for any target policy $\pi$, which substitutes $\Rewips(\pi_{\ms})$ for the missing support of $\Rewips(\pi)$. 
\begin{equation}
    \Rminsup(\pi) = \Rewips(\pi)+\Big(1-S_{\Dcal}(\pol|\polnull)\Big)\Rewips(\pi_{\ms})
\end{equation}
Note that IPS is unbiased for $\pi_{MinSup}$ since it has the same support set as $\pi_0$. Furthermore, it has bounded variance since the IPS weights are bounded by construction. For all the empirical evaluations in this paper, we use \ms\ to select the optimal $k$. We also compare \ms\ to DM and Conservative Extrapolation for this model-selection problem in Section~\ref{modsel}. 

\paragraph{Practical Considerations.} Among the methods we proposed for dealing with support deficiency, the Policy Restriction approach is easy to implement, does not require an additional regression model with unknown bias, and it does not require access to the logging policy during training or testing. In particular, the form of the inner objective coincides with that of BanditNet \citep{Joachims/etal/18a}, which is known to work well for deep network training by controlling propensity overfitting \citep{Swaminathan/Joachims/15c}.
 
\section{Empirical Evaluation} 
\label{experiments}
We empirically analyze and compare the effectiveness and robustness of the three approaches: restricting the action space, reward extrapolation, and restricting the policy space. We use two real-world datasets, namely the image-classification dataset CIFAR10 \citep{krizhevsky2014cifar} and the credit-card fraud dataset of \cite{dal2015calibrating}, from which we generate various degrees of support deficient bandit data. 

The experiments are set up as follows. We first create a train-validation-test split for both datasets. The training set is used to generate bandit datasets for learning, the validation set is used to generate bandit datasets for model selection, and the full-information test set serves as ground truth for evaluating the learned policies. To simulate bandit feedback for the CIFAR10 dataset, our experiment setup follows traditional supervised $\to$ bandit conversion for multi-class classification datasets \citep{beygelzimer2009offset}.
To not limit our evaluation to binary multi-class rewards, we choose a different methodology for the credit-card dataset by designating some features as corresponding to actions and rewards. More details are given in Appendix~\ref{Appendix:experiment}.

For both logging and target policies, we train softmax policies (Equation~\eqref{softmax_policy}) where $f_w(x,y)$ is a neural network. We use the ResNet20 architecture \citep{he2016deep} for CIFAR10, and a fully connected 2-layer network for the credit-card dataset. We then introduce a temperature parameter $\tau$ into the learned policy via $\tau f_w(x,y)$ to be able to control its stochasticity and support deficiency.  
In particular, we enforce zero support for some actions by clipping the propensities to 0 if they are below a threshold of $\epsilon=0.01$. The larger $\tau$, the higher the support deficiency. Note that setting the threshold at $\epsilon=0.01$ allows us to control support 
without having to worry about variance control. More details are given in Appendix~\ref{Appendix:experiment}.

The estimators that we examined in our experiments are: first, traditional baselines including naive IPS (Eq.~\eqref{est:ips}) and DM; second, the Action Restriction approach (Eq.~\eqref{est:action_res}); third, three Reward Extrapolation methods (Conservative Extrapolation, Regression Extrapolation, and DR (Eq.~\eqref{est:augdr}); as well as, fourth, the Policy Restriction approach (Eq.~\eqref{est: policy_res}). If not mentioned otherwise, \ms\ is used to select the $k$ parameter and all experiment results are averaged over 5 runs.

\begin{figure}[tb]
  \centering
  \begin{minipage}[b]{0.48\textwidth}
    \includegraphics[width=8.4cm,height=5cm]{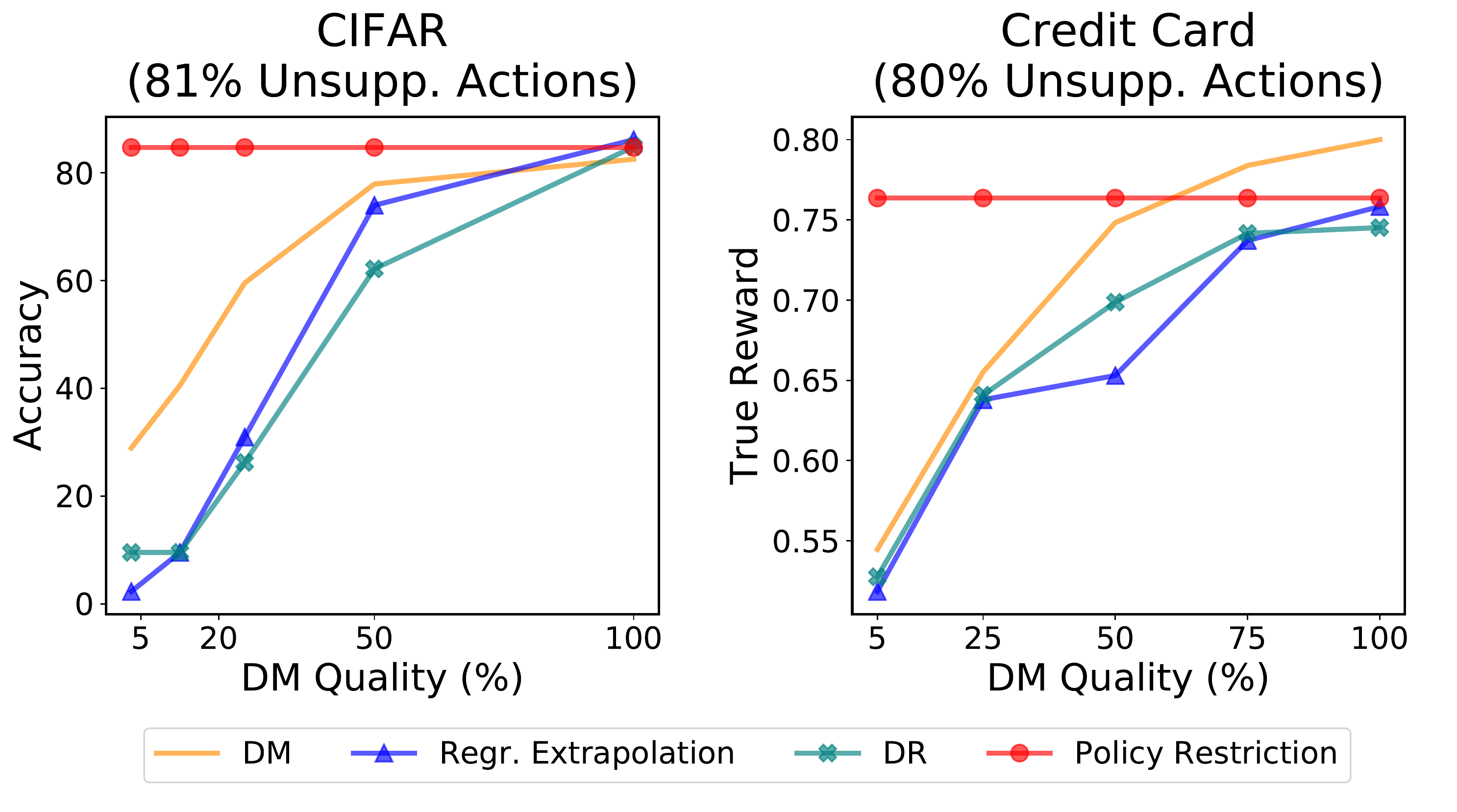} 
    \caption{Test set accuracy for different levels of misspecification of the regression model.}
    \label{partial_dm_cifar}
  \end{minipage}
\end{figure}

\begin{figure*}[t]
\centering
  \includegraphics[width=18cm, height=6cm]{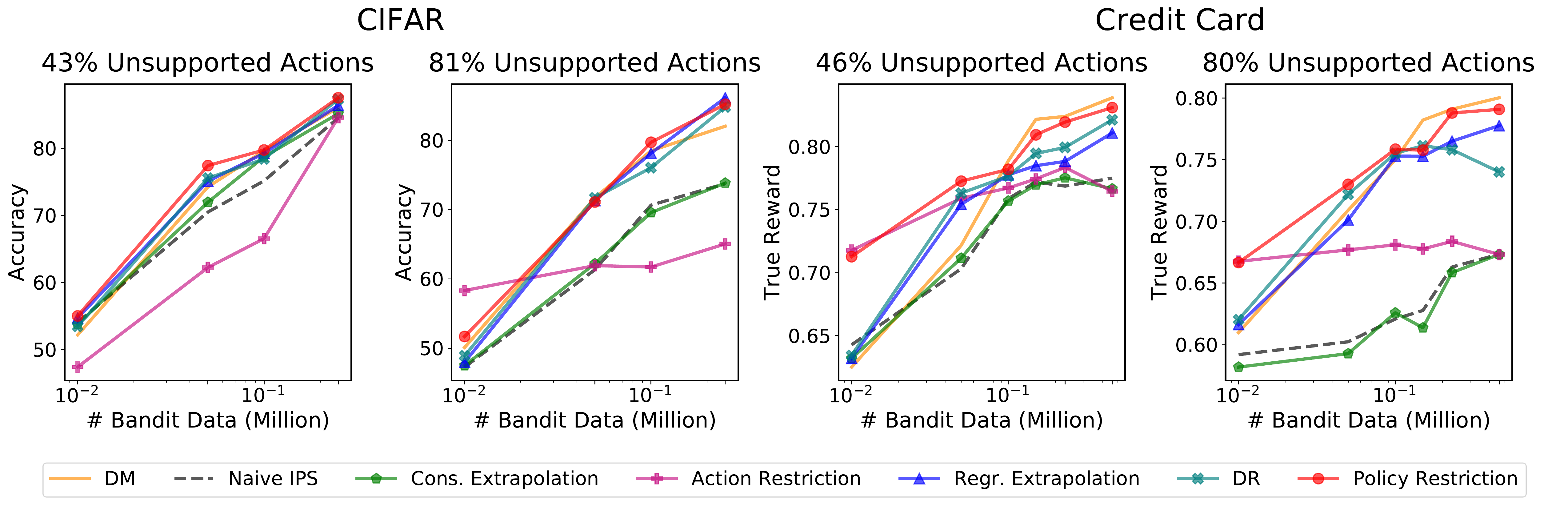}
  \caption{Test set accuracy as the amount of training data increases for two levels of support deficiency.} 
  \label{data_vs_oracle}
\end{figure*}

\begin{figure}[t]
  \centering
  \begin{minipage}[b]{0.48\textwidth}
    \includegraphics[width=8.4cm,height=5cm]{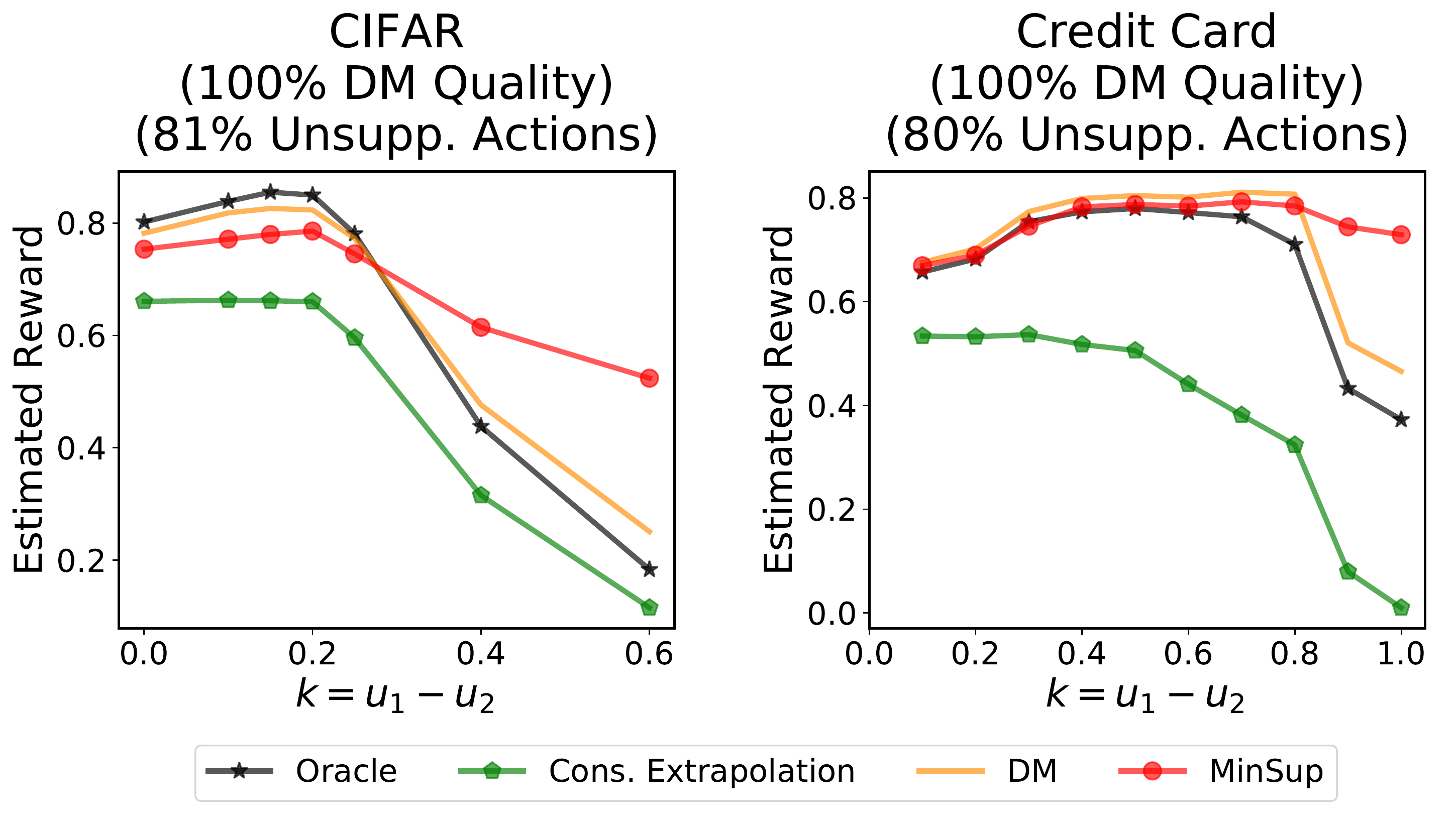}
    \caption{Comparison of estimators of validation set accuracy for the policies learned by Policy Restriction when the parameter $k$ is varied.}
    \label{msplot}
  \end{minipage}
\end{figure}

\begin{figure*}[tbp]
\centering
  \includegraphics[width=18cm, height=5.6cm]{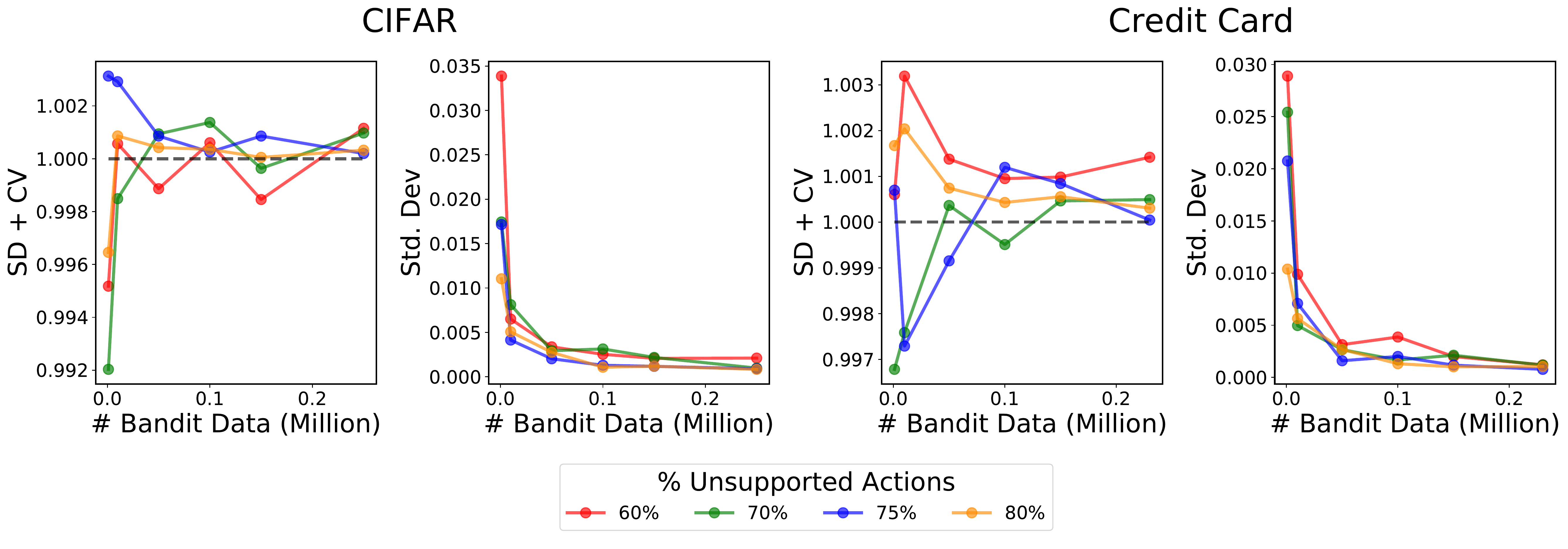}
  \caption{Estimate of $S_{\Dcal}(\pi|\pi_0)+\Dcal_{\Xcal}(\pi|\pi_0)$ and its estimated standard deviation as the amount of training data increases.}
\label{cv_vp}
\end{figure*}


        
        

\subsection{Experiments and Findings}
\label{sec: results}

The following experiments investigate the key properties of the estimators, and they inform our recommendations and conclusions.

\paragraph{How do the methods perform at different level of support deficiency?}

Figure \ref{tao_vs_oracle} shows the test accuracy and support divergence $\SDxpol$ as support deficiency increases. First, as expected, learning using naive IPS degrades on both datasets. Note that naive IPS coincides with Conservative Extrapolation in the left two columns, since both datasets are scaled to have a minimum reward of zero. In the rightmost column, however, we translated the rewards from $[0,1]$ to $[-1,0]$. This has a strong detrimental effect on naive IPS. IPS is inherently imputing reward 0, and the performance of naive IPS highly depends on the position of $0$ in the range of the reward.
Second, the Action Restriction approach also performs poorly. While its support divergence $\SDxpol$ is zero and thus bias is not the problem, we conjecture that the best actions are often pruned from the action-restricted policy space $\Polres$.
Third, Regression Extrapolation tends to perform better than Conservative Extrapolation in our experiments. On both datasets, the DM model turns out to be quite good, which also benefits DR. However, on the credit-card dataset the regression seems better at ranking than at predicting the true reward, which explains why DM performs better than Regression Extrapolation. 
Fourth, the methods that performs well on both datasets are Policy Restriction and DR. Unlike all the other IPS-based methods, Policy Restriction performs well even under the translated rewards in the third column of Figure \ref{tao_vs_oracle}. This is because the objective of Policy Restriction coincides with that of BanditNet \citep{Joachims/etal/18a}, which is known to remedy propensity overfitting due to the lack of equivariance of the IPS estimator \citep{Swaminathan/Joachims/15d}.

\paragraph{How does regression-model misspecification affect the estimators?} While DR performed well in the previous experiments, it has no mechanism for guarding against bias from model misspecification under support deficiency. The same limitation also applies to the other estimators that rely on regression imputation for the unsupported actions. We thus examine how different estimators deal with model misspecification. To simulate increasing levels of model misspecification, we train the regression function on subsets of input features of varying size. Results are shown in Figure~\ref{partial_dm_cifar}. As the number of input feature decreases and misspecification thus increases, the effectiveness of DR, Regression Extrapolation and DM decreases substantially. Since Policy Restriction does not rely on any regression model, it is unaffected.

\paragraph{How does the learning performance change with the amount of training data?}

Figure~\ref{data_vs_oracle} shows how accuracy on the test set changes with the amount for training data for two levels of support deficiency. 
Policy Restriction is at least competitive with Regression Extrapolation, DR, and DM over most of the range. Action Restriction can take the least advantage of more data. This is plausible, since its maximum performance is limited by the available actions. For similar reasons, Conservative Extrapolation, and equivalently IPS, also flatten out, since they also tightly restrict the action space by imputing the minimum reward.

\begin{table}[tb]
\centering
  \begin{tabular}[b]{  c  c  c  c  c  }\hline
        \% Unsupp. & Oracle & DM & Cons. Extra. & \ms \\ \hline
        43 & 88.397 & 87.526 & \textbf{88.397} & \textbf{88.397} \\
        60 & 86.782 & \textbf{86.782}  & \textbf{86.782} & \textbf{86.782}\\
        69 & 86.462 & 85.308 & 85.308 & 85.308 \\ 
        77 & 85.295 & 85.282 & 84.090 & \textbf{85.295} \\ 
        81 & 85.192 & \textbf{85.192} & 83.526 & 84.680 \\ \hline
    \end{tabular}
\caption{Test error rates on the CIFAR10 data using the respective model selection method under varying levels of support deficiency in the logging policy.}
\label{tabel:model_selection_cifar}
\vspace{-0.5cm}
\end{table}


\paragraph{How effective is model selection for $k$?} 
\label{modsel}
If there is no pre-specified risk tolerance $\tau$ that is derived from application requirements, Section~\ref{policyres} proposed to select the parameter $k$ on a validation set. Table~\ref{tabel:model_selection_cifar} shows how the proposed \ms\ model selection criterion compares against model selection via DM and Conservative Extrapolation. We also report the skyline performance of an  Oracle model selector which has access to the full-information validation set. Conservative extrapolation consistently underperforms in high-deficiency settings, while both \ms\ and DM perform well as model selection criteria. This is explained by the plots in Figure~\ref{msplot}, which show the value of the validation set estimates. While \ms\ and DM have their optimum close to that of Oracle, the Conservative Extrapolation curve is substantially off. Additional results in Appendix~\ref{Appendix:results}, however, show that model misspecification can again substantially affect the performance of model selection via DM.



\paragraph{How accurate is the estimator of support divergence?}
Policy restriction relies on the estimated support divergence, and we now evaluate the effectiveness of this estimator. The target policy is the uniform policy while the logging policies are varying in their support deficiency. We investigate the behaviour of $S_{\Dcal}(\pi|\pi_0)+\Dcal_{\Xcal}(\pi|\pi_0)$ for increasing amounts of training data and different support deficiency of the corresponding logging policy. Results are shown in Figure~\ref{cv_vp} averaged over 10 runs. The figure shows that the sum converges to 1 and the variance of the estimate decreases as the amount of training data increases. The curves for different levels of support deficiency converge in a similar fashion and we conjecture it is due to the effect of clipping the propensity at the same threshold $\epsilon=0.01$, which results in $p_{min}=0.01$ in the bound shown in Proposition~\ref{cvest}.

\section{Discussion and Recommendations}

We now discuss the advantages and disadvantages of the various approaches to dealing with support deficiency, and provide guidance for their use in practice.

While Action Restriction may be the the most natural and direct approach at first glance, we find that it has substantial drawbacks. In particular, it is computationally expensive since we need to calculate the propensity under the logging policy $\pi_0(y|x)$ for every $x$ not only at training time, but also at testing time. This is particularly problematic when policies are updated in a frequent manner, since we need to revisit the whole sequence of past executed logging policies at test time. Furthermore, its learning performance is substantially worse than other methods, since actions are limited to an overly conservative regime that enforces zero support divergence.  

Conservative Extrapolation allows the target policy to select actions that have zero support under the logging policy, and performance tends to be better than for Action Restriction. However, Conservative Extrapolation typically performs worse than Policy Restriction, Regression Extrapolation, and DR. All methods are more efficient that Action Restriction at test time as they do not require evaluating the old logging policy. During training, however, all Reward Imputation methods (i.e. Conservative Extrapolation, Regression Extrapolation, and DR) need to evaluate all actions, which can be expensive but ameliorated through sampling. A key risk of both Regression Extrapolation and DR is that they rely on a regression model, which can introduce biases from model misspecification that are fundamentally unknown. The estimators provide no mechanism for guarding against such biases.

Policy Restriction does not rely on a regression model, which eliminates the need for training such a model. Furthermore, it is the only method that allows flexible risk control through the parameter $\kappa$ or $k$ respectively. If the application does not provide a risk threshold, $k$ can be selected empirically via \ms. From a computational perspective, Policy Restriction is efficient at both training and test time, and it has minimal logging requirements as it only requires the logged propensity of the chosen action. Since it consistently showed at least competitive generalization performance across a wide range of settings, we conclude that it is a preferable choice for practical applications -- especially when there are concerns about model misspecification.

\section{Conclusions}
This paper presented the first comprehensive analysis of support deficiency in off-policy learning for contextual bandits. In particular, it identified and explored three approaches to dealing with support deficiency: restricting the action space, reward extrapolation, and restricting the policy space. The paper characterized the theoretical properties of these approaches, and empirically evaluated their performance and robustness. We conclude that restricting the policy space is particularly effective, since it provides explicit risk control, performs well in terms of learning performance, and it is easy and efficient to implement.



\begin{acks}
This research was supported in part by NSF Award IIS-1901168 and by a Bloomberg Fellowship. All content represents the opinion of the authors, which is not necessarily shared or endorsed by their respective employers and/or sponsors.
\end{acks}

\bibliographystyle{ACM-Reference-Format}
\balance
\bibliography{iclr2020_conference}
\newpage

\newpage
\appendix

\section{Appendix: Proofs}
\label{Appendix:proofs}
\subsection{Proof of Proposition 1}
\label{Appendix: proofapp1}
\ipsbias*
\begin{proof}
Recall $\delta(x,y)=\EE_r[r(x,y)|x,y]$, and logged data $\Dcal\sim \Pcal_{\Xcal}\times\pi_0(\cdot|\Xcal)\times\Pcal_{r}$.
\begin{equation}
\begin{split}
    bias(\Rewips(\pi)) & = \EE_{\Dcal}\left[\hat{R}_{IPS}(\pi)\right]- R(\pi)\\
    &= \EE_x\bigg[\sum_{y\in (\Ucal(x,\pi_0))^c}\pi_0(y|x)\frac{\pi(y|x)}{\pi_0(y|x)}\delta(x,y)\\
    &-\sum_{y\in\Ycal}\pi(y|x)\delta(x,y)\bigg]\\
    &= \EE_{x}\bigg[-\sum_{y\in \Ucal(x,\pi_0)}\pi(y|x)\delta(x,y)\bigg]
\end{split}
\end{equation}
\end{proof}

\subsection{Proof of Proposition 2}
\label{Appendix: proofapp2}
\ipsregbias*
\begin{proof}
Based on the definition of $\Rewipsaug(\pol)$:
\begin{equation}
    \begin{split}
        & \EE_{\Dcal}[\Rewipsaug(\pol)]-R(\pi)\\
        = & \EE_x\bigg[\sum_{y\in \Ucal(x,\pi_0)^c}\pi(y|x)\delta(x,y)+\sum_{y\in \Ucal(x,\pi_0)}\pi(y|x)\hat{\delta}(x,y)\bigg]-R(\pi)\\
        = &\EE_x\bigg[\sum_{y\in\Ucal(x,\pi_0)}\pi(y|x)(\hat{\delta}(x,y)-\delta(x,y))\bigg]\\
        = &\EE_x\bigg[\sum_{y\in\Ucal(x,\pi_0)}\pi(y|x)\Delta(x,y)\bigg]
    \end{split}
\end{equation}
The second equality is based on the decomposition of $R(\pi)$, and the last one is based on the definition of $\Delta(x,y):=\hat{\delta}(x,y)-\delta(x,y)$ for all $x\in\Xcal,y\in\Ycal$.
\end{proof}

\subsection{Proof of Efficient Approximation}
\label{Appendix:effapp}
\begin{claim}
The empirical risk defined by in Equation~\eqref{erm_samp} has the same expectation (over randomness in $\Dcal$ and sampling) as $\Rewipsaug(\Dcal)$.
\end{claim}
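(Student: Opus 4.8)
The plan is to show that the two objectives differ only in a single sampling-based term, and that this term is an unbiased estimator of the corresponding deterministic sum in Equation~\eqref{imp}. The plain IPS summand over $\Dcal$ (the first term, $\frac{1}{n}\sum_i \frac{\pi(y_i|x_i)}{\pi_0(y_i|x_i)}r_i$) is literally identical in Equation~\eqref{imp} and Equation~\eqref{erm_samp}, so its expectation contributes equally to both and I can set it aside. The entire argument then reduces to matching the expectations of the two \emph{second} terms.

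First I would unpack the augmented dataset produced by Algorithm~\ref{data_aug}. It contains $m = kn$ tuples, indexed by a replay count $j' \in \{1,\dots,k\}$ and a logged index $i \in \{1,\dots,n\}$; the tuple for the pair $(j',i)$ reuses the logged context $x_i$, draws an action $y_{j',i}$ uniformly from $\Ucal(x_i,\pi_0)$, and records propensity $p' = 1/|\Ucal(x_i,\pi_0)|$. Substituting these into the second term of Equation~\eqref{erm_samp} and using $m=kn$ rewrites it as $\frac{1}{kn}\sum_{j'=1}^{k}\sum_{i=1}^{n} |\Ucal(x_i,\pi_0)|\,\pi(y_{j',i}|x_i)\,\hat{\delta}(x_i,y_{j',i})$.

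Next I would condition on $\Dcal$ (hence on the contexts $x_i$) and take the expectation over the uniform draws of each $y_{j',i}$. Since $y_{j',i}\sim \mathrm{Unif}(\Ucal(x_i,\pi_0))$ assigns mass $1/|\Ucal(x_i,\pi_0)|$ to each unsupported action, the factor $|\Ucal(x_i,\pi_0)|$ cancels against this probability, and each summand has conditional expectation $\sum_{y\in\Ucal(x_i,\pi_0)}\pi(y|x_i)\,\hat{\delta}(x_i,y)$ — this is the key importance-sampling identity. As this value does not depend on $j'$, the sum over the $k$ replays cancels the $\tfrac{1}{k}$ factor, leaving exactly $\frac{1}{n}\sum_{i=1}^{n}\sum_{y\in\Ucal(x_i,\pi_0)}\pi(y|x_i)\,\hat{\delta}(x_i,y)$, which is precisely the second term of Equation~\eqref{imp}. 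Finally I would take the outer expectation over $\Dcal$ via the tower property; since the two objectives now agree term by term in conditional expectation, their full expectations coincide, establishing that Equation~\eqref{erm_samp} and $\Rewipsaug(\Dcal)$ have the same expected value.

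The computation is essentially routine, so there is no single hard analytical step; the only real care needed is bookkeeping — correctly identifying that $m=kn$, tracking the double index $(j',i)$ of the augmented data, and applying the conditional-expectation argument in the right order so that the uniform-sampling propensity $1/|\Ucal(x_i,\pi_0)|$ and the replay count $k$ both cancel cleanly.
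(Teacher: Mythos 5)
Your proposal is correct and follows essentially the same route as the paper's proof: both arguments hinge on the observation that the uniform-sampling propensity $1/|\Ucal(x_i,\pi_0)|$ recorded by Algorithm~\ref{data_aug} cancels in the importance-weighted term, so that the sampled second term of Equation~\eqref{erm_samp} has expectation $\EE_x\big[\sum_{y\in\Ucal(x,\pi_0)}\pi(y|x)\hat{\delta}(x,y)\big]$, matching the corresponding term of $\Rewipsaug$. Your version merely organizes the computation slightly differently (setting aside the shared IPS term and conditioning on $\Dcal$ before applying the tower property, with explicit $m=kn$ bookkeeping), whereas the paper computes the full expectation of each objective and compares; the content is the same.
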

\begin{proof}
Taking the expectation of empirical risk defined in Equation~\eqref{erm_samp}:
\begin{equation*}
\label{regexp}
\begin{split}
&\EE\bigg[\frac{1}{n}\sum_{i=1}^n \frac{\pol(\y_i|\x_i)}{\polnull(\y_i|\x_i)}\rew_i +\frac{1}{m}\sum_{j=1}^m \frac{\pol(\y_j|\x_j)}{\prop_j}\rewhatxy(\x_j,\y_j)\bigg]\\
\end{split}
\end{equation*}
\begin{equation}
\begin{split}
&=\EE_x\bigg[\sum_{y\in \Ucal(x,\pi_0)^c}\pi_0(y|x)\frac{\pi(y|x)}{\pi_0(y|x)}\delta(x,y)\bigg]\\
&+\EE_x\bigg[\sum_{y\in\Ucal(x,\pi_0)}\frac{1}{|\Ucal(x,\pi_0)|}\frac{\pi(y|x)}{\frac{1}{|\Ucal(x,\pi_0)|}}\hat{\delta}(x,y)\bigg]\\
&=\EE_x\bigg[\sum_{y\in \Ucal(x,\pi_0)^c}\pi(y|x)\delta(x,y)\big] + \EE_x\big[\sum_{y\in \Ucal(x,\pi_0)}\pi(y|x)\hat{\delta}(x,y)\bigg]
\end{split}
\end{equation}
Now we will show it has the same expectation with $\Rewipsaug(\pol)$
\begin{equation}
\label{regsamexp}
\begin{split}
&\EE_{\Dcal}\bigg[\frac{\pol(\y_i|\x_i)}{\polnull(\y_i|\x_i)}\rew_i+\sum_{\y\in\Ucalpol{\x_i}}\pol(\y|\x_i)\rewhatxy(\x_i,\y)\bigg]\\
&=\EE_x\bigg[\EE_{y\sim\pi_0}\Big[\frac{\pi(y|x)}{\pi_0(y|x)}\delta(x,y)\Big]+\sum_{y'\in\Ucal(x,\pi_0)}\pi(y'|x)\hat{\delta}(x,y')\bigg]\\
&=\EE_{x}\bigg[\sum_{y\in \Ucal(x,\pi_0)^c}\pi_0(y|x)\frac{\pi(y|x)}{\pi_0(y|x)}\delta(x,y)+\sum_{y'\in\Ucal(x,\pi_0)}\pi(y'|x)\hat{\delta}(x,y')\bigg]\\
&=\EE_x\bigg[\sum_{y\in \Ucal(x,\pi_0)^c}\pi(y|x)\delta(x,y)\bigg] + \EE_x\bigg[\sum_{y\in \Ucal(x,\pi_0)}\pi(y|x)\hat{\delta}(x,y)\bigg]
\end{split}
\end{equation}
The proof is done by comparing Equation~\eqref{regexp} and Equation~\eqref{regsamexp}.
\end{proof}

\subsection{Proof of Theorem 2}
\label{Appendix:theo2}
\cvsd*
\begin{proof}
\begin{equation}
\begin{split}
    &\EE_{\x\sim P(\Xcal)}\EE_{\y\sim\polnull(\cdot|\x)}[S_{\Dcal}(\pi|\pi_0)] + \SDxpol \\
    &=\EE_x\bigg[\sum_{y\in \Ucal(x,\pi_0)^c}\pi_0(y|x)\frac{\pi(y|x)}{\pi_0(y|x)}\bigg] +\Dcal_{\Xcal}(\pi|\pi_0)\\
    &=\EE_x\bigg[\sum_{y\in\Ucal(x,\pi_0)^c}\pi(y|x)]+\EE_x[\sum_{y\in\Ucal(x,\pi_0)}\pi(y|x)\bigg]\\
    &=\EE_x\bigg[\sum_{y\in\Ycal}\pi(y|x)\bigg]=1
\end{split}
\end{equation}

The first equality is based on definition of $S_{\Dcal}(\pi|\pi_0)$ and the second equality is based on definition of support divergence.
\end{proof}

\subsection{Proof of Proposition 3}
\label{Appendix:prop3}
\cvest*
\begin{proof}
Recall $S_{\Dcal}(\pi|\pi_0)=\frac{1}{n}\sum_{i=1}^n \frac{\pi(y_i|x_i)}{\pi_0(y_i|x_i)}$ with $(x_i,y_i)$ draw i.i.d from $P(\Xcal)\times\pi_0(\Ycal|x)$. Also, it is easy to see $\EE_{x,y\sim\pi_0(\cdot|x)}[\frac{\pi(y|x)}{\pi_0(y|x)}]=1-\Dcal_{\Xcal}(\pi|\pi_0)$. Let $p_{min}$ denote the smallest propensity under supported action set with $p_{min}:=\min_{x,y\in\Ucal(x,\pi_0)^c}\pi_0(y|x)>0$, then the random variable $\frac{\pi(y|x)}{\pi_0(y|x)}$ is strictly bounded between $[0, \frac{1}{p_{min}}]$. Applying Hoeffding's bound gives:
\begin{equation}
\begin{split}
    \PP(\Dcal_{\Xcal}(\pi|\pi_0)<1-S_{\Dcal}(\pi|\pi_0)-\epsilon)&=\PP(S_{\Dcal}(\pi|\pi_0)\\&-(1-\Dcal_{\Xcal}(\pi|\pi_0))<-\epsilon)\\
    &\leq exp(-2n\epsilon^2p^2_{min})
\end{split}
\end{equation}
Since $S_{\Dcal(\pi|\pi_0)}\leq 1-\epsilon$ gives $1-S_{\Dcal}(\pi|\pi_0)-\epsilon\geq 0$, then we have
\begin{equation}
    \PP(\Dcal_{\Xcal}(\pi|\pi_0)<0)\leq exp(-2n\epsilon^2p^2_{min})
\end{equation}
Similar for the other direction, Hoeffding's bound gives:
\begin{equation}
\begin{split}
    \PP(\Dcal_{\Xcal}(\pi|\pi_0)>1-S_{\Dcal}(\pi|\pi_0)+\epsilon)&=\PP(S_{\Dcal}(\pi|\pi_0)-(1-\Dcal_{\Xcal}(\pi|\pi_0))\\
    &>\epsilon)\leq exp(-2n\epsilon^2p^2_{min})
\end{split}
\end{equation}
Since $S_{\Dcal(\pi|\pi_0)}\geq 1+\epsilon-\kappa$ gives $1-S_{\Dcal}(\pi|\pi_0)+\epsilon\leq \kappa$, then we have
\begin{equation}
    \PP(\Dcal_{\Xcal}(\pi|\pi_0)\geq\kappa)\leq exp(-2n\epsilon^2p^2_{min})
\end{equation}
Combining the above, we have
\begin{equation}
\begin{split}
     \PP(0\leq\Dcal_{\Xcal}(\pi|\pi_0)\leq\kappa)&= 1- \PP(\Dcal_{\Xcal}(\pi|\pi_0)<0)-\PP(\Dcal_{\Xcal}(\pi|\pi_0)>\kappa)\\
     &\geq 1-2exp(-2n\epsilon^2p^2_{min})
\end{split}
\end{equation}
\end{proof}

\section{Appendix: Experiment Details}
\label{Appendix:experiment}
\paragraph{Datasets and baseline.} 
We follow a 75:10:15 train-validation-test split for credit card fraud detection dataset, while for CIFAR10 already coming with a train-test split, we keep 10\% of the training set as validation set. Baseline estimators are IPS and DM, the hyperparameters (learning rate, $L_2$ regularization) are optimized for all the methods based on the validation set.

\paragraph{Bandit data generation.}
For CIFAR10, given supervised data in the format of $\{x_i, y_i^{*}\}_{i=1}^n$ where $x_i$ denotes the 3072 features and $y^{*}_i$ denotes the correct label of data (ranging from $0$ to $9$), under logging policy $\pi_0$, the logged bandit data is generated by drawing $y_i\sim \pi_0(\Ycal|x_i)$, then a deterministic reward is defined as $\ind_{\{y_i=y^{*}_i\}}$. 
For the credit card fraud detection dataset, we aim to have continuous rewards rather than binary. To do so, we throw away the class label and only use the data features for each sample to generate bandit data. To be specific, for each sample with a 28-dimensional feature vector, we define the first 20 features as the contextual information, and use the remaining 8 features as the underlying true reward for 8 different actions (with normalization). 

\paragraph{Logging policy.} 
For CIFAR, we learn the softmax logging policy on 35K full-information data points as a multi-class classification problem with cross-entropy loss. Similar as the experiments on BanditNet \citep{Joachims/etal/18a}, we adopt the conventional ResNet20 architecture but restrict training after a mere two epochs to derive a relative stochastic policy, since it will be easier to add temperature later to control its stochasticity and support deficiency. Similarly, for the credit card fraud detection dataset, the softmax logging policy is learned on 8K full-information data points by treating it as a multi-class classification problem using cross-entropy loss and the label being the action with the highest reward on this specific context. For CIFAR, the logging policy we trained has a 57.43\% accuracy on the test-set; whereas for the credit card fraud detection dataset, the logging policy has an expected true reward of 0.71.

\paragraph{Reward estimator.}
For each experiment, we train a different regression function using the full bandit dataset. We use the same architecture as the one used for off-policy learning (ResNet20 for CIFAR10, two layer neural network for the Credit Card dataset) - where the final layer is the size of the actions, specifying the reward for each action given a particular context. The regression function is trained using the MSE objective.

\label{Appendix:results}
\paragraph{Model selection details.}
We provide the model selection result for credit card in Table~\ref{tabel:model_selection_credit}. In this dataset, both DM and \ms ~achieve near-oracle performance under different levels of support deficiency. In Figure~\ref{ms_cifar}, we test how various estimators perform under model-misspecification on the CIFAR dataset. As the number of input feature decreases. the quality of DM diminishes, which affects its performance used in model selection. \ms ~is pretty robust under model misspecification.
\begin{table}[!htb]
\centering
  \begin{tabular}[b]{  c  c  c  c  c  }\hline
        \% Unsupp. & Oracle & DM  & Cons. Extra. & \ms\\ \hline
            0 & 0.793 & \textbf{0.793} & \textbf{0.793} & \textbf{0.793} \\ \hline
            20 & 0.803 & 0.791  & 0.791 & \textbf{0.803}\\ \hline
            45 & 0.799 & 0.798  & 0.795 & \textbf{0.799}\\ \hline
            60 & 0.797 & \textbf{0.797} & \textbf{0.797} & 0.796\\ \hline
            75 & 0.778 & \textbf{0.778} & 0.766 & \textbf{0.778}\\ \hline
            80 & 0.774 & 0.759 & 0.751 & 0.759\\ \hline
    \end{tabular}
\caption{Model selection results for Credit Card with varying levels of support deficiency in the logging policy.}
\label{tabel:model_selection_credit}
\vspace{-0.4cm}
\end{table}

\begin{figure}[!htb]
 \includegraphics[width=8cm, height=6.5cm]{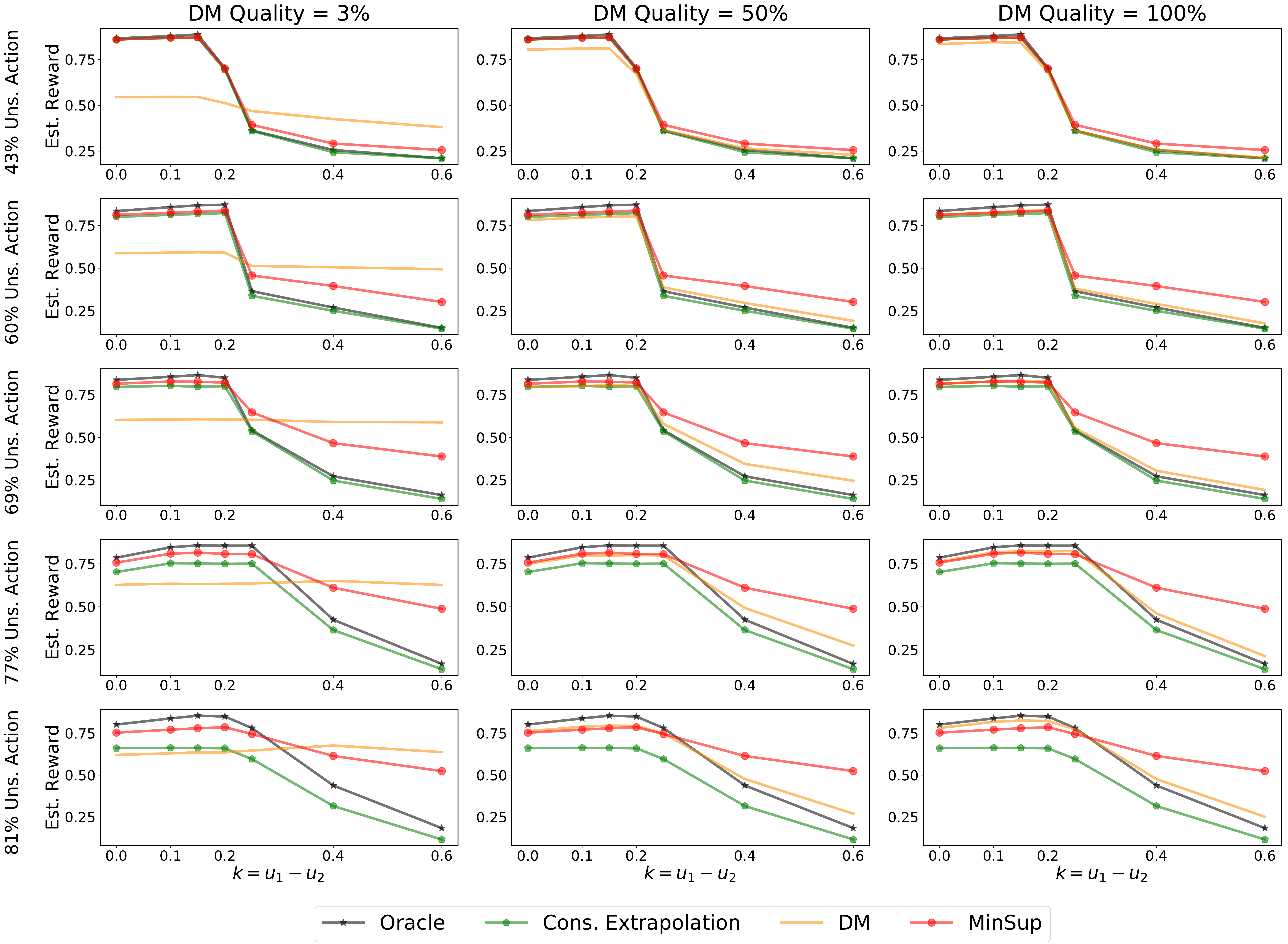} 
\caption{Model selection results for various support deficiencies and different levels of model misspecifications on CIFAR.} 
\label{ms_cifar}
\end{figure}

\end{document}